\newtheorem{theorem}{Theorem}[section]
\newtheorem{prop}{Proposition}[section]
\newtheorem{assumption}{Assumption}[section]
\newtheorem{defn}{Definition}[section]
\newtheorem{example}{Example}[section]
\newcommand{\rmd}{\mathrm{d}}
\newcommand{\E}{\mathbb{E}}
\newcounter{xxx}
\title{Score-Optimal Diffusion Schedules}
\author{%
  Christopher Williams
    \\
  Department of Statistics\\
  University of Oxford\\
  \And
  Andrew Campbell \\
  Department of Statistics\\
  University of Oxford\\
  \AND  
  Arnaud Doucet\\
 Department of Statistics\\
  University of Oxford\\
  \And
  Saifuddin Syed\\
   Department of Statistics\\
  University of Oxford\\
  \And
  \texttt{\{williams,campbell,doucet,saifuddin.syed\}@stats.ox.ac.uk}
}
\begin{document}

\maketitle
\vspace{-2em}

\begin{abstract}
    Denoising diffusion models (DDMs) offer a flexible framework for sampling from high dimensional data distributions. DDMs generate a path of probability distributions interpolating between a reference Gaussian distribution and a data distribution by incrementally injecting noise into the data. To numerically simulate the sampling process, a discretisation schedule from the reference back towards clean data must be chosen. An appropriate discretisation schedule is crucial to obtain high quality samples. However, beyond hand crafted heuristics, a general method for choosing this schedule remains elusive.
    This paper presents a novel algorithm for adaptively selecting an optimal discretisation schedule with respect to a cost that we derive.
    Our cost measures the work done by the simulation procedure to transport samples from one point in the diffusion path to the next.
    Our method does not require hyperparameter tuning and adapts to the dynamics and geometry of the diffusion path. Our algorithm only involves the evaluation of the estimated Stein score, making it scalable to existing pre-trained models at inference time and online during training.
    We find that our learned schedule recovers performant schedules previously only discovered through manual search and obtains competitive FID scores on image datasets.
\end{abstract}

\section{Introduction}

Denoising Diffusion models \citep{sohl-dickstein_deep_2015,ho_denoising_2020,song2020score} or DDMs are state-of-the-art generative models. %
They are formulated through considering a forward noising process that interpolates from the target to a reference Gaussian distribution by gradually introducing noise into an empirical data distribution. Simulating the time reversal of this process then produces samples from the data distribution. 
Specifically, we evolve data distribution $p_0$ through the \textit{forward diffusion} process on time interval $[0, 1]$, described by
\begin{align}\label{eq:fwd-diffusion}
\rmd X_t = f(t) X_t \rmd t + g(t) \rmd W_t,\qquad X_0 \sim p_0,
\end{align}
with drift $f(t) X_t$, diffusion coefficient $g(t)$ and Brownian noise increment $\rmd W_t$. 
The coefficients $f(t)$ and $g(t)$ are chosen such that at time $t=1$ the distribution of $X_1$ is very close to a reference Gaussian distribution $p_1$ in distribution.
A sample starting at $p_0$ and following \Cref{eq:fwd-diffusion} until time $t$ will be distributed according to $p_t$ which is a mollified version of the data distribution
\begin{align}
\textstyle
    p_t(x_t) = \int_{X_0} p_0(x_0) p_{t|0}(x_t | x_0) \rmd x_0, \qquad p_{t|0}(x_t | x_0) = \mathcal{N}(x_t; s(t) x_0, \sigma^2(t) I).
\end{align}
\newpage
The parameters $s(t)$ and $\sigma^2(t)$ define the \textit{noising schedule}. They can be found in closed form in terms of $f(t)$ and $g(t)$ \citep{karras2022elucidating}.
To obtain samples from $p_0$, we can reverse the dynamics of the forward diffusion in \Cref{eq:fwd-diffusion} to obtain the \emph{backward diffusion},
\begin{align}\label{eq:bwd-diffusion}
     \rmd X_{t} = \left( f(t) X_t - g(t)^2 \nabla_x \log p_t(X_t) \right) \rmd t + g(t)  \rmd \widetilde{W}_t,\qquad X_1 \sim p_1 .
\end{align}
By simulating \Cref{eq:bwd-diffusion} backwards in time, we evolve reference samples $X_1 \sim p_1$ from $t=1$ to $t=0$ to obtain samples that are terminally distributed according to the data distribution $p_0$.
To simulate \Cref{eq:bwd-diffusion} numerically, we must decide upon a discretisation of time, $\mathcal{T}=\{ t_i \}_{i=0}^T$ with $t_T = 1$, $t_0 = 0$, which we refer to as the \textit{discretisation schedule}. For a given noising schedule, it is important to select an appropriate discretisation schedule such that \eqref{eq:bwd-diffusion} can be simulated accurately, i.e. $p_{t_i}$ and $p_{t_{i-1}}$ should not differ significantly.
In this paper, we derive a methodology to compute an optimal discretisation schedule.

\begin{figure}
    \centering
    \includegraphics[width = \textwidth]{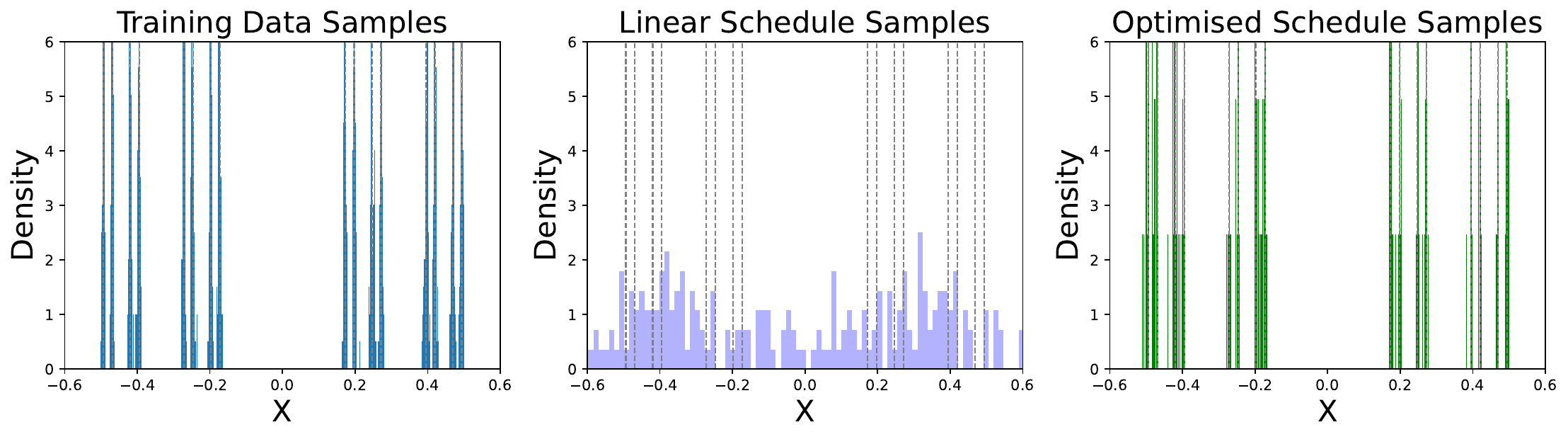}
    \caption{Density estimates of the mollified Cantor distribution (left) using a DDM with schedule $\mathcal{T}=\{t_i\}_{i=0}^{100}$ generated with $100$ linearly spaces discretisation times $t_i=i/100$ (middle), compared to the optimised schedule $\mathcal{T}^*=\{t_i^*\}_{i=0}^{50}$ with $50$ discretisation times $t_i^*$ generated by \Cref{alg:beta} (right).
    The eight modes present in our true mollified distribution are shown in grey on each plot.
 }
    \label{fig:cantor_figure_1}
\end{figure}

Prior work has often joined together the choice of noising schedule and discretisation schedule. 
A uniform splitting of time would be chosen, $t_i = i/T$, with the noising schedule dictating the change between $p_{t_i}$ and $p_{t_{i-1}}$. 
Two prominent examples have the form $s(t) = \sqrt{\bar{\alpha}(t)}$, $\sigma^2(t) = 1 - \bar{\alpha}(t)$ with the \emph{linear schedule} introduced by \cite{ho_denoising_2020} having $\smash{\bar{\alpha}(t) = 1 - \text{exp}\big( - \int_0^t \beta(s) \rmd s \big)}$ with linear $\beta(t) = \beta_{\text{min}} + t (\beta_{\text{max}} - \beta_{\text{min}})$. 
Alternatively, \cite{nichol2021improved} introduce the \emph{cosine schedule} with $\bar{\alpha}(t) = f(t)/f(0)$, $\smash{f(t) = \text{cos} \big( \frac{t + \epsilon}{1 + \epsilon} \frac{\pi}{2} \big)^2}$ for $\epsilon = 0.008$. 
\cite{karras2022elucidating} refines this approach by splitting the choice of noising schedule from that of the discretisation schedule, however, picking the discretisation schedule is still a matter of hyperparameter tuning.

A good discretisation schedule can drastically impact the efficiency of the training and inference of the generative model, but unfortunately can be difficult to select for complex target distributions. For example, for a distribution supported on the Cantor set (\Cref{fig:cantor_figure_1}, left), the default linear schedule fails entirely to capture the modes of the data distribution (\Cref{fig:cantor_figure_1}, middle). 
However, our optimised schedule learned using \Cref{alg:beta} recovers these modes (\Cref{fig:cantor_figure_1}, right).
Without such an automatic algorithm, finding performant discretisation schedules often reduces to an expensive and laborious hyperparameter sweep.

We devise a method for selecting a discretisation schedule that yields high-quality samples from $p_0$. Our key contribution is defining an appropriate notion of cost incurred when simulating from one step in the diffusion path to the next. We then choose our discretisation schedule to minimise the total cost incurred when simulating the entire path from $p_1$ to $p_0$. 
Our cost purely depends on the distributional shifts along the diffusion path and assumes perfect score estimation, hence, we refer to our schedules as \textit{score-optimal diffusion schedules}.
The resulting algorithm is cheap to compute, easy to implement and requires no hyperparameter search.
Our algorithm can be applied to find discretisation schedules for sampling pre-trained models as well as performed online during DDM training.
We demonstrate our proposed method on highly complex 1D distributions and show our method scales to high dimensional image data where it recovers previously known performant discretisation schedules discovered only through manual hyperparameter search.
To the best of our knowledge, this is the first online data-dependent adaptive discretisation schedule tuning algorithm. %

\section{The Cost of Traversing the Diffusion Path}
\label{sec:cost_of_traversal}
To derive our optimal discretisation schedule, we first need to derive a notion of cost of traversing from our reference distribution $p_1$ to the data distribution $p_0$ through each intermediate distribution $p_t$, referred to as the diffusion path. We will then later find the discretisation schedule that minimises the total cost of traversing this path. Our notion of cost is based on the idea that while integrating \Cref{eq:bwd-diffusion} from time $t$ to $t'$ will always take you from $p_t$ to $p_{t'}$, the simulation step will need to do more work to make sure the samples are distributed according to $p_{t'}$ if $p_t$ and $p_{t'}$ are very different distributions rather than if they are close. In the following, we will make this intuition precise.

\subsection{Predictor/Corrector Decomposition of the Diffusion Update}
\label{sec:predictor_corrector_decomp}
To begin, let $\mathbb{X} = \mathbb{R}^d$ and $\mathbb{P}(\mathbb{X})$ be the space of Lebesgue probability measures on $\mathbb{X}$ with smooth densities. For $t\in[0,1]$, define the diffusion path $p_t\in \mathbb{P}(\mathbb{X})$ as the law of $X_t$ satisfying the forward diffusion \Cref{eq:fwd-diffusion} initialised at the data distribution $X_0\sim p_0$, or equivalently the law of the $X_t$ satisfying the backward diffusion \Cref{eq:bwd-diffusion} initialised at the reference distribution $X_1\sim p_1$.

Given a sample $X_t\sim p_t$, a sample $X_{t'}\sim p_{t'}$ can be generated by integrating the backward diffusion \Cref{eq:bwd-diffusion}.
In \cite{song2020score}, it was shown that we can further decompose the backward diffusion \Cref{eq:bwd-diffusion} at time $t$ into a deterministic flow governed by the probability flow ODE, and the stochastic flow driven by Langevin dynamics targeting $p_t$,
\begin{align}\label{eq:ODE-langevin-decomp}
\textstyle
     \rmd X_{t} = 
     \underbrace{ \textstyle \big(f(t)X_t - \frac{1}{2}g(t)^2\nabla \log p_t(X_t)\big)  \rmd t }_{\text{Probability Flow Prediction ODE}}
     \quad + \quad
     \underbrace{ \textstyle
     - \frac{1}{2} g(t)^2\nabla \log p_t(X_t) \rmd t 
     +
    g(t)\rmd \widetilde{W}_t. 
     }_{
     \text{Langevin Correction SDE}
     }
\end{align}
This decomposition into a deterministic flow and a correction will help us derive our cost in Section \ref{sec:pre-langevin} by analysing the work done by the correction to keep the samples on the diffusion path. Here, we will first expand upon this decomposition by defining a hypothetical two-step sampling procedure that could be used to sample the DDM. It consists of:
(1) a \emph{predictor step} that generates a deterministic prediction of $X_{t'}$ and (2) \emph{a corrector step} that uses Langevin dynamics targeting $p_{t'}$ to correct any accrued error.
Note we are not advocating for the implementation of such a procedure, only that by imagining simulating with this hypothetical predictor-corrector algorithm, it will be helpful for our theoretical derivation of a cost intrinsically linked to the sampling of DDMs. 
The two stages of the predictor-corrector algorithm are rigorously defined as follows.

\begin{defn}
    A predictor for $t,t'\in [0,1]$ is a smooth bijective mapping $F_{t,t'}:\mathbb{X}\mapsto\mathbb{X}$, such that $\det \nabla F_{t,t'}\neq 0$, and the predicted distribution is the pushforward of $p_t$ by $F_{t,t'}$ denoted $F_{t,t'}^\sharp p_t$.
\end{defn}
\begin{defn}
    A corrector for $t\in[0,1], \tau\in [0,\infty)$ is a one-parameter family Markov transition kernel $L_{t,\tau}:\mathbb{X}\times\mathbb{P}(\mathbb{X})\mapsto [0,1]$ such that $Z_\tau \sim L_{t,\tau}(z,\rmd z_\tau)$ is the law of Langevin dynamics stationary distribution $p_t$ at time $\tau$, initialised at $z\in\mathbb{X}$, and running at speed $v(t)>0$,
\begin{align}
\label{eq:langevin-SDE}
Z_0=z,\quad \rmd Z_\tau= v(t)\nabla \log p_t(Z_{\tau})\rmd \tau + \sqrt{2 v(t)} \rmd W_\tau.
\end{align}
\end{defn}
The corrector map $L_{t,\tau}$ is specified through an integration time $\tau$ and time-dependent speed $v(t)$. We assume $\tau$ is fixed and we will describe the appropriate choice of $v(t)$ in \Cref{sec:choice_of_velocity}.
The predictor-corrector algorithm, given $X_t \sim p_t$, first applies the predictor $Z_0 = F_{t, t'}(X_t)$ and then uses the corrector to drive the predicted samples towards $p_{t'}$,
\begin{align}
    \label{eq:predictor-corrector-diffusion}
 \text{Predictor:} \quad Z_0 = F_{t, t'}(X_t) \hspace{1cm} \text{Corrector:} \quad X_{t',\tau} \sim L_{t',\tau}(Z_0,\rmd x_\tau).
\end{align}
In general, $Z_0$ will not be a sample from $p_{t'}$ exactly because $F_{t, t'}$ may not be a perfect transport from $p_t$ to $p_{t'}$. In \Cref{sec:pre-langevin}, assessing the work done by $L_{t', \tau}$ to drive the $Z_0$ towards $p_{t'}$ will be key in deriving our cost.
Our cost will then depend upon the specific choice for $F_{t, t'}$.
Two natural choices for $F_{t, t'}$ are apparent.
Setting $F_{t, t'}$ to the identity means our hypothetical sampling algorithm reduces to the annealed Langevin algorithm for DDMs introduced by \cite{song_generative_2019}.
The second natural choice is to set $F_{t, t'}$ to the integrator of the probability flow ODE \citep{song2020score}.

\begin{example}[Annealed Langevin]\label{ex:predictor-langevin}\normalfont
The predictor step is trivial when the predictor map is the identity $F_{t,t'}(x)=x$. 
In such a case, the predicted state reduces to the initial state, $F_{t,t'}(X_t)=X_t\sim p_t$. 
The work done by the corrector step will then be related to the full discrepancy between $p_{t}$ and $p_{t'}$ because the predictor provides no help in transporting the sample.
\end{example}

\begin{example}[Probability Flow ODE]\label{ex:predictor-ODE}\normalfont
The predictor step is optimal when $F_{t,t'}$ is a transport from $p_{t}$ to $p_{t'}$. 
In such a case, the predicted state produces a sample from the target distribution, $F_{t,t'}(X_t)\sim p_{t'}$, and so the corrector step would have to perform no work.
An optimal predictor map $F_{t,t'}$ can be obtained by integrating the probability flow ODE from time $t,t'$,
\begin{equation}\label{eq:probability-flow-ODE}
     \frac{\rmd x_t}{\rmd t}=f(t) x_t-\frac{1}{2}g(t)^2\nabla\log p_{t}(x_t).
\end{equation}
Practical algorithms numerically integrate \Cref{eq:probability-flow-ODE}, e.g. an Euler step with $\Delta t = t' - t$,
\begin{align}\label{eq:probability-flow-ODE-euler}
\textstyle
F_{t,t'}(x)= x+\big(f(t)x-\frac{1}{2}g(t)^2\nabla \log p_t(x) \big)\Delta t.
\end{align}
In such a case, the work done by the corrector depends on the error in the probability flow integrator.
\end{example}

\subsection{The Incremental Cost of Correction} 
\label{sec:pre-langevin}
We now focus on deriving a cost related to the work done by the corrector step in the predictor-corrector algorithm. Later, in \Cref{sec:score-optimal-schedules}, we will find the discretisation schedule that minimises the total cost. To derive the cost, we will analyse the movement of $Z_0$ under the corrector step's dynamics $L_{t', \tau}(Z_0, \rmd x_\tau)$. This requires some care because even if $Z_0$ is already at stationarity, i.e. perfectly distributed according to $p_{t'}$, applying the Langevin correction step will still result in movement of $Z_0$ due to the stochasticity of the update. 
However, the computed \emph{work} done by the correction step in this case should be $0$. 
To correctly assign the \emph{work} done, we will compare two processes.
The first is the trajectory of Langevin dynamics, $Z_\tau$, defined by the corrector $L_{t',\tau}$ initialised at $Z_0 = F_{t,t'}(X_t)$ targeting $p_{t'}$.
The second is a virtual coupled Langevin dynamics $\tilde{Z}_\tau$ initialised at $F_{t,t'}(X_t)$, driven by the same noise and speed but targeting the stationary distribution of the predictor $F^\sharp_{t,t'}p_t$,
\begin{align}
 Z_0=F_{t,t'}(X_t),\quad \rmd Z_\tau 
 &= v(t')\nabla \log p_{t'}(Z_\tau) \rmd \tau + \sqrt{2v(t')}\rmd W_\tau , \\
  \tilde{Z}_0=F_{t,t'}(X_t),\quad \rmd \tilde{Z}_\tau 
  &= v(t')\nabla \log F^\sharp_{t,t'}p_{t}(\tilde{Z}_{\tau}) \rmd \tau + \sqrt{2v(t')}\rmd W_\tau.
\end{align}
Notably, $Z_\tau\stackrel{d}{=}  X_{t',\tau}$ and $\tilde{Z}_\tau\stackrel{d}{=}F_{t,t'}(X_t)$ share the same law as the corrected sample and predicted sample respectively. Since $Z_\tau$ and $\tilde{Z}_\tau$ are coupled to have the same noise, the difference in their trajectory, $Z_\tau-\tilde{Z}_\tau$, isolates the change in corrector dynamics due to discrepancy between $\smash{F^\sharp_{t,t'}p_t}$ and $p_{t'}$.
If $\smash{F^\sharp_{t, t'} p_t}$ is very different to $p_{t'}$, then $Z_\tau - \tilde{Z}_\tau$ will be large, signifying the corrector is needing to do lots of work to push the distribution of $Z$ towards the target $p_{t'}$. Conversely, if $\smash{F^\sharp_{t,t'} p_t = p_{t'}}$, then $Z_\tau - \tilde{Z}_\tau = 0$ and no work is done.
For small $\tau$, $(Z_\tau - Z_0) / \tau$ is the initial velocity of $Z$ under the $p_{t'}$ corrector dynamics, and similarly for $(\tilde{Z}_\tau - Z_0) / \tau$. We can then define the \emph{incremental cost} $\mathcal{L}(t,t')$ by taking limits as $\tau \to 0^+$, measuring the expected $L^2$ norm $\|\cdot\|$ of the difference,
\begin{align}\label{def:incremental-loss}
\mathcal{L}(t,t')=\lim_{\tau\to 0^{+}} \tau^{-2} \, \E \left[\left\|(Z_\tau-Z_0)-(\tilde{Z}_\tau-Z_0)\right\|^2\right] =\lim_{\tau\to 0^{+}} \tau^{-2} \, \E\left[\left\|{Z_\tau-\tilde{Z}_\tau}\right\|^2\right] .
\end{align}
We can approximate $Z_\tau -\tilde{Z}_\tau$ using an Euler step, noting that the coupled noise terms cancel, 
\begin{align}\label{eq:langevin-taylor}
Z_\tau -\tilde{Z}_\tau= \tau v(t')(\nabla \log p_{t'}(Z_{t,t'})-\nabla \log F_{t,t'}^\sharp p_t(Z_{t,t'})) + o(\tau).
\end{align}
By substituting \Cref{eq:langevin-taylor} in \Cref{def:incremental-loss}, we have
\begin{align}\label{eq:loss-expectation-predictor}
\mathcal{L}(t,t')=v(t')^2\E\left[\left\|\nabla \log p_{t'}(Z_0)-\nabla \log F_{t,t'}^\sharp p_t(Z_0)\right\|^2\right]=v(t')^2D(p_{t'}\|F_{t,t'}^\sharp p_t),
\end{align}
where $D(p\|q)=\E_{X\sim q}[\|\nabla \log p(X)-\nabla\log q(X)\|^2]$ is a statistical divergence on $p,q\in \mathbb{P}(\mathbb{X})$, measuring the $L^2$ distance between the scores of $q$ and $p$ with respect $q$. $D(p\|q)$ is referred to as the Stein divergence or the Fisher divergence; see e.g. \citep{johnson2004information}.
For a given choice of $v(t)$ and $F_{t,t'}$ we now have a cost measuring the change from $p_t$ to $p_{t'}$. This cost is intrinsically linked with the effort performed by a DDM sampling algorithm because it is derived through considering the work done by a hypothetical predictor-corrector style update. We note, however, that this general cost can be used to obtain discretisation schedules for use in any style of DDM sampler.

\subsection{Corrector and Predictor Optimised Cost}
By inverting $Z_0=F_{t,t'}(X_t)$, we can express \Cref{eq:loss-expectation-predictor} in terms of an expectation with respect to the reference sample $X_t\sim p_t$, and the score of $G_{t,t'}:\mathbb{X}\mapsto \mathbb{R}_+$, the incremental weight function associated with the transport $F_{t,t'}$ from the Sequential Monte Carlo literature \citep{arbel2021annealed}, 
\begin{align}\label{eq:loss-expectation-reference}   
 \mathcal{L}(t,t')=v(t')^2\E\left[\left\|\nabla
 \log G_{t,t'}(X_t)\right\|^2\right],\quad G_{t,t'}(x)=\frac{p_{t'}(F_{t,t'}(x))}{p_t(x)}\left|\det\nabla F_{t,t'}(x)\right|.
\end{align}
In most cases, it is infeasible to efficiently compute the Jacobian correction in \Cref{eq:loss-expectation-reference}. When $F_{t,t'}(x)=x$ is the identity map corresponding to the corrector optimised update from \Cref{ex:predictor-langevin} \Cref{eq:loss-expectation-reference} reduces a rescaled Stein discrepancy between $p_t$ and $p_{t'}$, and $G_{t,t}(x)=p_{t'}(x)/p_t(x)$ reduces to the likelihood-ratio between $p_{t'}$ and $p_t$. 
We will refer to this case as the \emph{corrector-optimised cost} denoted $\mathcal{L}_c(t,t')$, to distinguished it from the \emph{predictor-optimised cost} $\mathcal{L}_p(t,t')$ derived above, where when relevant, we will use subscripts $c$ and $p$ to distinguish between the two:
\begin{align}\label{eq:loss-corrector-predictor-incrmental}
\mathcal{L}_c(t,t')=v(t')^2D(p_{t'}\|p_t),\qquad \mathcal{L}_p(t,t')=v(t')^2D(p_{t'}\|F_{t,t'}^\sharp p_t).
\end{align}
The corrector-optimised cost $\mathcal{L}_c(t,t')$ provides meaningful information during the update from reference $p_t$ to the target $p_{t'}$. It is worth computing even when the predictor-optimised cost $\mathcal{L}_p(t,t')$ is accessible. 
$\mathcal{L}_c(t,t')$ measures the change between the reference and target distribution independent of the predictor, whereas $\mathcal{L}_p(t,t')$ measures the residual error between the predictor and target. Notably, $\mathcal{L}_c(t,t')$ encodes information about the incremental geometry of the diffusion path, whereas $\mathcal{L}_p(t,t')$ quantifies information about the incremental efficiency of the predictor. Generally, one does not dominate the other, but if the predictor is well-tuned and the predictor flows samples $X_t\sim p_t$ towards $p_{t'}$, we would expect $\mathcal{L}_p(t,t')\leq \mathcal{L}_c(t,t')$.

For deriving our optimal discretisation schedule, we require a notion of how $\mathcal{L}(t, t')$ increases with small increases in $t'$ i.e. knowing local changes in incremental cost. 
In \Cref{sec:score-optimal-schedules}, we use this \textit{local cost} to assign \emph{distances} to schedules through time, enabling us to find the best schedule. We derive the desired local cost in \Cref{thm:incremental}, see \Cref{app:theory} for a PDE and geometric interpretation.
\begin{theorem}\label{thm:incremental}
Suppose $p_t(x),F_{t,t'}(x),v(t)$ and $G_{t,t'}(x)$ are three-times continuously differentiable in $t,t',x$ and let $\dot F_t(x)=\left.\frac{\partial}{\partial t'}F_{t,t'}(x)\right|_{t'=t}$ and $\dot G_t(x)=\left.\frac{\partial}{\partial t'}G_{t,t'}(x)\right|_{t'=t}$. Suppose the following hold: (1) for all $x\in \mathbb{X}, t\in [0,1]$, $F_{t,t}(x)=x$ and (2) there exists $V:\mathbb{X}\mapsto \mathbb{R}$ such that for all $x\in \mathbb{X}$ and $t\in [0,1]$, $\|\nabla \dot G_t(x)\|^2\leq V(x)$ and $\sup_{t\in [0,1]}\E_{X_t\sim p_t}[V(X_t)]<\infty$. Then for all $t\in[0,1]$, we have $\mathcal{L}(t,t')=\delta(t)\Delta t^2+O(\Delta t^3)$, where
\begin{align}\label{def:local-loss}
\delta(t)=v(t)^2\E_{X_t\sim p_t}\left[\left\|\nabla \dot G_t(X_t)\right\|^2 \right], \quad \dot G_t =\frac{\partial}{\partial t}\log p_t+\nabla\log p_t\cdot \dot F_t+\mathrm{Tr}\nabla\dot F_t.
\end{align}
\end{theorem}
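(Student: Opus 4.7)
The plan is to Taylor-expand the integrand in the closed form $\mathcal{L}(t,t')=v(t')^2\,\E_{X_t\sim p_t}[\|\nabla\log G_{t,t'}(X_t)\|^2]$ derived in equation \eqref{eq:loss-expectation-reference} around the diagonal $t'=t$, using the condition $F_{t,t}(x)=x$ as the anchor that forces the zeroth-order term to vanish.

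First I would note that hypothesis (1), $F_{t,t}(x)=x$, gives $\nabla F_{t,t}(x)=I$ and hence $G_{t,t}(x)=p_t(x)/p_t(x)\cdot 1=1$. In particular $\log G_{t,t}\equiv 0$, so $\nabla\log G_{t,t}(x)\equiv 0$, and the Taylor expansion of $\nabla\log G_{t,t'}(x)$ in $t'$ around $t$ starts at first order. Because $G_{t,t}=1$, the first-order coefficient is simply $\partial_{t'}\log G_{t,t'}\bigr|_{t'=t}=\partial_{t'}G_{t,t'}\bigr|_{t'=t}=\dot G_t(x)$, i.e.
\begin{equation*}
\nabla\log G_{t,t'}(x)=\Delta t\,\nabla\dot G_t(x)+R(t,t',x),
\end{equation*}
where $R(t,t',x)=O(\Delta t^2)$ pointwise by the $C^3$ regularity assumption.

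Next I would verify the stated formula for $\dot G_t$. Differentiating $G_{t,t'}(x)=p_{t'}(F_{t,t'}(x))\,|\det\nabla F_{t,t'}(x)|/p_t(x)$ in $t'$ at $t'=t$, using (i) the chain rule on the $p_{t'}\circ F_{t,t'}$ factor (which contributes $\partial_t p_t(x)+\nabla p_t(x)\cdot\dot F_t(x)$ after dividing by $p_t(x)$), and (ii) Jacobi's formula $\partial_{t'}\det\nabla F_{t,t'}\bigr|_{t'=t}=\mathrm{Tr}\,\nabla\dot F_t(x)$ on the Jacobian factor (together with $\det\nabla F_{t,t}=1$ so $|\cdot|$ is smooth there), recovers exactly $\dot G_t=\partial_t\log p_t+\nabla\log p_t\cdot\dot F_t+\mathrm{Tr}\,\nabla\dot F_t$.

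Having these two ingredients, I would square, expand, and take expectations: $\|\nabla\log G_{t,t'}(X_t)\|^2=\Delta t^2\|\nabla\dot G_t(X_t)\|^2+2\Delta t\,\nabla\dot G_t(X_t)\cdot R+\|R\|^2$, where $R=O(\Delta t^2)$ pointwise. Multiplying by $v(t')^2=v(t)^2+O(\Delta t)$ and collecting powers of $\Delta t$ gives the claim, provided the $O(\Delta t^3)$ pointwise remainders survive the expectation. This is precisely where hypothesis (2) is used: the bound $\|\nabla\dot G_t(x)\|^2\le V(x)$ with $\sup_t\E_{X_t\sim p_t}[V(X_t)]<\infty$, combined with the analogous pointwise bounds on the higher $t'$-derivatives of $\nabla\log G_{t,t'}$ that follow from $C^3$ regularity of $p_t,F_{t,t'},G_{t,t'}$, supplies a dominating integrable function for the Taylor remainder uniformly in $t'$ near $t$. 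Invoking dominated convergence to move the $O(\Delta t^3)$ pointwise error outside the expectation yields $\mathcal{L}(t,t')=v(t)^2\,\E[\|\nabla\dot G_t(X_t)\|^2]\,\Delta t^2+O(\Delta t^3)$, as claimed.

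The main obstacle is the last step: the integrand $\|\nabla\log G_{t,t'}\|^2$ involves the score of a pushforward against the density $p_t$ rather than against its own base measure, so there is no a priori cancellation and the remainder has to be controlled by a genuine dominated-convergence argument. Everything else is bookkeeping of Taylor expansions and the Jacobi identity.
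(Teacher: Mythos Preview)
Your proposal is correct and follows essentially the same route as the paper: both start from the closed form \eqref{eq:loss-expectation-reference}, use $F_{t,t}=x\Rightarrow G_{t,t}=1$ to kill the zeroth-order term, expand in $t'$ around $t$, compute $\dot G_t$ by chain rule on $\log p_{t'}\circ F_{t,t'}$ plus Jacobi's formula on the determinant, and invoke hypothesis~(2) for dominated convergence. The only cosmetic difference is that the paper packages the first step via the mean value theorem (writing $\nabla\log G_{t,t'}(x)=(t'-t)\,\nabla\dot G_{s(t,t')}(x)$ for some intermediate $s(t,t')$) rather than a Taylor expansion with explicit remainder; this is the same idea in different clothing.
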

Theorem \ref{thm:incremental} shows that, under regularity assumptions, then the incremental cost is $\mathcal{L}(t,t')\approx \delta(t)\Delta t^2$ is locally quadratic and controlled by the local cost $\delta(t)$. The $\delta(t)$ measures the sensitivity of the incremental cost $\mathcal{L}(t,t')$ to moving samples along the diffusion path to $t'\approx t$. Notably, $\delta(t)=0$ if and only if the predictor satisfies the continuity equation, $\frac{\partial}{\partial t}p_t+\nabla\cdot (p_t\dot F_t)=0$.

\section{Score-Optimal Schedules}
\label{sec:score-optimal-schedules}
Given a discretisation schedule $\mathcal{T}=(t_i)_{i=0}^T$ satisfying $0=t_0<\cdots<t_T=1$, our hypothetical predictor-corrector algorithm recursively uses the predictor and corrector maps to generate a sequence $(X_i)_{i=0}^T$ starting at $X_T \sim p_1$ such that the terminal state $X_0$ approximates samples from $p_0$,
\begin{align}
    \label{eq:predictor-corrector-diffusion-discrete}
X_i \sim L_{t_i,\tau}(F_{t_{i+1},t_i}(X_{i+1}),\rmd x_i).
\end{align}
We want to identify a discretisation schedule that maximises the efficiency of this iterative procedure.
This is not generally possible due to the potential complex interactions that arise from the accrued errors. To simplify our analysis, we make the following assumption.
\begin{assumption}\label{assumption:perfect-samples}
For all $t,t'$, if $X_t\sim p_t$ and $X_{t',\tau}=L_{t',\tau}(F_{t,t'}(X_t),\rmd x_\tau)$, then $X_{t',\tau}\sim p_{t'}$.
\end{assumption}
\Cref{assumption:perfect-samples} is reasonable if, in our hypothetical corrector steps, $\tau$ is set sufficiently large such that the Langevin correction converges to stationarity. We find in our experiments that even if the schedules derived under \Cref{assumption:perfect-samples} are used in sampling algorithms for which \Cref{assumption:perfect-samples} does not hold, we still obtain high quality samples.
Equipped with \Cref{assumption:perfect-samples}, we can measure the efficiency of the path update through total accumulated cost $\mathcal{L}=\sum_{i=1}^T\mathcal{L}(t_{i+1},t_i)$, which we will use as our objective to optimise  $\mathcal{T}$. 
In this section, we will identify the optimal schedule $\mathcal{T}^*$ minimising the cost $\mathcal{L}$ by considering an infinitely dense limit.
We will then provide a tuning procedure amenable to online schedule optimisation during training.
Finally, we will discuss a suitable choice for $v(t)$, the velocity of our hypothetical corrector steps, as well as related work.

\subsection{Diffusion Schedule Path Length and Energy}
\label{sec:diffusion_schedule_path_length_and_energy}
Let $\varphi:[0,1]\mapsto [0,1]$ be a strictly increasing, differentiable function such that $\varphi(0)=0$ and $\varphi(1)=1$. We will say $\mathcal{T}$ is generated by $\varphi$ if $t_i=\varphi(i/T)$ for all $i=0,\dots,T$. The schedule generator $\varphi$ dictates how fast our samples move through their diffusion path. Since every schedule $\mathcal{T}$ of size $T$ is generated by some $\varphi$, optimising $\mathcal{T}$ is equivalent to finding a generator $\varphi$ minimising $\mathcal{L}(\varphi,T)$, the total cost accumulated by the schedule of size $T$ generated by $\varphi$. By Jensen's inequality, we have $\mathcal{L}(\varphi,T)\geq \Lambda(\varphi,T)^2/T$, where for $t_i=\varphi(i/T)$,
\begin{align}\label{def:lambda-discrete}
    \mathcal{L}(\varphi,T):=\sum_{i=1}^T \mathcal{L}(t_{i+1},t_i),\quad   \Lambda(\varphi,T)=\sum_{i=1}^T\sqrt{\mathcal{L}(t_{i+1},t_i)}.
\end{align}
As we later prove in \Cref{thm:geodesics}, in the dense schedule limit as $T\to\infty$, the cost $\mathcal{L}(\varphi,T)$ and its lower bound $\Lambda(\varphi,T)$ are controlled by the \emph{energy} $E(\varphi)$ and \emph{length} $\Lambda$ respectively where,
\begin{align}\label{def:lambda-continuous}
    E(\varphi) = \int_0^1\delta(\varphi(s))\dot\varphi(s)^2\rmd s,\quad \Lambda = \int_0^1\sqrt{\delta(t)}\rmd t.
\end{align}
The intuition for why $E(\varphi)$ is an energy, and $\Lambda$ a length can be gained by first conceptualising the diffusion time $t$ as a spatial variable rescaled by the metric $\delta(t)$ defined by our cost $\mathcal{L}$. We have $\varphi$ and $\dot \varphi$ are position and velocity, respectively. Integrating the speed $\int_{0}^1 \sqrt{\delta(\varphi(s))}\dot\varphi(s) \rmd s = \int_{0}^1 \sqrt{\delta(t)} \rmd t$ along a curve $\varphi(s)$ obtains the ``length'' $\Lambda$, whilst integrating a speed squared, $\int_0^1\delta(\varphi(s))\dot\varphi(s)^2\rmd s$ obtains a ``kinetic energy'' $E(\varphi)$. Note that the length is an invariant of the schedule, whereas the kinetic energy is not. The length $\Lambda$ measures the intrinsic difficulty of traversing the diffusion path according to the cost independent of $\varphi$, whereas $E(\varphi)$ measures the efficiency of how the path was traversed using $\varphi$. This geometric intuition hints at the solution to the optimal scheduling problem. The optimal $\varphi$ should travel on a geodesic path from $p_1$ to $p_0$, at a constant speed with respect to metric $\delta$. For this optimal $\varphi$, we then have the kinetic energy being equal to the square of length between $p_1$ and $p_0$. \Cref{thm:geodesics} makes the previous discussion precise.

\begin{theorem} \label{thm:geodesics}
Suppose the assumptions of \Cref{thm:incremental} hold. For all schedule generators $\varphi$,
\begin{align}
\lim_{T\to\infty} T\mathcal{L}(\varphi,T)=E(\varphi),\quad \lim_{T\to\infty} \Lambda(\varphi,T)=\Lambda.
\end{align}
Moreover, $E(\varphi)\geq \Lambda^2$, with equality if and only if $\varphi^*$ satisfies,
    \begin{align}\label{eq:optimal-schedule-generator}
        \varphi^*(s)=\Lambda^{-1}(\Lambda  s),\quad \Lambda(t)=\int_0^t\sqrt{\delta(u)}\rmd u.
    \end{align}
\end{theorem}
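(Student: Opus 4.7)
The plan is to break the proof into three parts matching the three claims: the two limit identities, the inequality $E(\varphi)\geq \Lambda^2$, and the characterisation of equality.

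For the limits, I would start from the incremental cost expansion $\mathcal{L}(t_{i+1},t_i)=\delta(t_i)\Delta t_i^2+O(\Delta t_i^3)$ given by \Cref{thm:incremental}, where $\Delta t_i = t_{i+1}-t_i$. With $t_i = \varphi(i/T)$ and $\varphi\in C^1$, a Taylor expansion yields $\Delta t_i = \dot\varphi(i/T)/T + O(1/T^2)$, so $\Delta t_i^2 = \dot\varphi(i/T)^2/T^2 + O(1/T^3)$ uniformly in $i$. Substituting, I obtain
\begin{align*}
T\,\mathcal{L}(\varphi,T) = \sum_{i=1}^T \delta(\varphi(i/T))\dot\varphi(i/T)^2 \cdot \frac{1}{T} + O(1/T),
\end{align*}
which is a Riemann sum converging to $E(\varphi) = \int_0^1 \delta(\varphi(s))\dot\varphi(s)^2\,\rmd s$. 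For the length, the same expansion gives $\sqrt{\mathcal{L}(t_{i+1},t_i)} = \sqrt{\delta(t_i)}\,\Delta t_i + O(\Delta t_i^{3/2})$, so
\begin{align*}
\Lambda(\varphi,T) = \sum_{i=1}^T \sqrt{\delta(\varphi(i/T))}\dot\varphi(i/T) \cdot \frac{1}{T} + O(T^{-1/2}),
\end{align*}
converging to $\int_0^1 \sqrt{\delta(\varphi(s))}\dot\varphi(s)\,\rmd s$, which by the change of variables $t=\varphi(s)$ equals $\Lambda=\int_0^1\sqrt{\delta(t)}\,\rmd t$, confirming that $\Lambda$ is invariant under reparametrisation.

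For the inequality, I would apply Cauchy--Schwarz to the product $\sqrt{\delta(\varphi(s))}\dot\varphi(s) \cdot 1$:
\begin{align*}
\Lambda^2 = \left(\int_0^1 \sqrt{\delta(\varphi(s))}\dot\varphi(s)\,\rmd s\right)^2 \leq \int_0^1 \delta(\varphi(s))\dot\varphi(s)^2\,\rmd s \cdot \int_0^1 1\,\rmd s = E(\varphi).
\end{align*}
Equality in Cauchy--Schwarz holds if and only if $\sqrt{\delta(\varphi(s))}\dot\varphi(s)$ is constant in $s$; integrating over $[0,1]$ forces this constant to equal $\Lambda$. Recognising the left side as $\frac{d}{ds}\Lambda(\varphi(s))$ with $\Lambda(t)=\int_0^t\sqrt{\delta(u)}\,\rmd u$, integrating and using $\varphi(0)=0$ yields $\Lambda(\varphi(s))=\Lambda s$, i.e.\ $\varphi^*(s)=\Lambda^{-1}(\Lambda s)$, which is well-defined since $\delta>0$ makes $\Lambda$ strictly increasing.

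The Cauchy--Schwarz and reparametrisation arguments are routine; the only genuine care is needed in the first step, namely controlling the $O(\Delta t^3)$ remainder from \Cref{thm:incremental} uniformly in $i$ and showing the Riemann sums converge. The main obstacle will be ensuring the $O(\cdot)$ constants are uniform in $t$ over $[0,1]$, which follows from the regularity assumptions of \Cref{thm:incremental} (three-times continuous differentiability of $p_t, F_{t,t'}, v, G_{t,t'}$ together with the dominating envelope $V$) and the continuity of $\varphi, \dot\varphi, \delta$ on the compact interval $[0,1]$. With uniform control, the Riemann sum argument goes through and both limits follow simultaneously.
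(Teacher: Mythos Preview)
The proposal is correct and follows essentially the same approach as the paper: both use the local expansion from \Cref{thm:incremental} together with a Taylor expansion of $\Delta t_i$ in terms of $\dot\varphi$ to reduce $T\mathcal{L}(\varphi,T)$ and $\Lambda(\varphi,T)$ to Riemann sums, and both obtain $E(\varphi)\geq\Lambda^2$ via the same integral inequality (the paper phrases it as Jensen, you as Cauchy--Schwarz, but these coincide here) with the equality case characterised by constant speed $\sqrt{\delta(\varphi(s))}\dot\varphi(s)=\Lambda$ and then integrated to give $\varphi^*(s)=\Lambda^{-1}(\Lambda s)$. Your treatment of the uniform error control is, if anything, slightly more careful than the paper's.
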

Notably independent of the choice of $\varphi$, as $T\to\infty$, the cost $\mathcal{L}(\varphi,T)\sim E(\varphi)/T$. This implies that the cost decays to zero at a linear rate, proportional to $E(\varphi)$ and $\mathcal{L}(\varphi,T)\gtrsim \Lambda^2/T$ independent of $\varphi$.
\Cref{eq:optimal-schedule-generator} provides an explicit formula for the optimal schedule generator that minimises the dense limit of the total cost and obtains the lower bound $E(\varphi^*) = \Lambda^2$. The intuition for the formula $\varphi^*(s) = \Lambda^{-1}( \Lambda s)$ is that this implies $\Lambda (\varphi^*(s)) = \Lambda s$ meaning say $10\%$ of the way through the optimal schedule, we should have traversed $10\%$ of the way along the distance between $p_1$ and $p_0$ i.e. $0.1 \times \Lambda$. This relation holds for constant speed straight lines, meaning $\varphi^*$ is the optimal schedule.
For a finite $T$, \Cref{thm:incremental} implies the optimal schedule $\mathcal{T}^*=\{t_i^*\}_{i=0}^T$ generated by $\varphi^*$ ensures the incremental cost is constant $\mathcal{L}(t_{i+1}^*,t_i^*)\approx \Lambda^2/T^2$ for all $i=0,\dots, T-1$.

Our geometric intuition in the language of differential geometry is that the diffusion path $\mathcal{M}=\{p_t\}_{t\in[0,1]}$ is Riemannian manifold with metric $\delta$ endowed by the incremental cost $\mathcal{L}(t,t')$. The schedule generator defines a curve $s\mapsto p_{\varphi(s)}\in\mathcal{M}$ reparametrising the diffusion path between $p_0$ and $p_1$. \Cref{thm:geodesics} shows that $\varphi^*$ is the geodesic of length $\Lambda$ in $\mathcal{M}$ between $p_1$ and $p_0$ that traverses the diffusion path at a constant speed $\sqrt{\delta(\varphi^*(s))}\dot\varphi^*(s)=\Lambda$ with respect to $\delta$ and minimises the cost.

\subsection{Estimation of Score-Optimal Schedules}
\label{sec:compute-cost}

Given a schedule $\mathcal{T}=\{t_i\}_{i=0}^T$ and estimates of the incremental cost $\mathcal{L}(t_{i+1},t_i)$, \Cref{alg:beta} adapts Algorithm 3 from \cite{syed2019nrpt} to estimate the optimal schedule $\mathcal{T}^*=\{t_i^*\}_{i=0}^T$ generated by $\varphi^*$.
We can use \Cref{alg:beta} to refine the schedule for a pre-trained DDM or learn the schedule jointly with the score function.
For this joint procedure, we detail in Appendix \ref{sec:adaptive_training} how function evaluations can be reused to estimate the cost to minimise computational overhead.
For $\mathcal{L}_c(t, t')$ we need only evaluate $\nabla \log p_t(X_t)$ and $\nabla \log p_{t'}(X_t)$ both available through our model's score estimate.
Computing $\mathcal{L}_p(t,t')$ is more challenging since there are Hessian terms that arise in \Cref{eq:loss-expectation-reference}. 
Under the assumption that the step size $\Delta t>0$ is sufficiently small, we can approximate $\nabla \log | \det \nabla F_{t,t'} ( X_t )|$ through \Cref{prop:score_jacobian}. 
This approximation only requires us to compute the gradient trace of the Jacobian of our predicted score. With computational cost proportionate to the computational effort for computing the first derivative. 
Using a Hutchinson trace \citep{hutchinson1989} like estimator in \Cref{prop:score_jacobian}, we compute this quantity memory-efficiently in high dimensions, requiring only standard auto-differentiation back-propagation.

\subsection{Choice of Velocity Scaling}
\label{sec:choice_of_velocity}
Recall that our cost is derived by considering a Langevin dynamics step with velocity $v(t)$. 
This velocity should be selected so that Langevin dynamics explores the same proportion of our distribution at varying times throughout our diffusion path. 
Thus, $v(t)$ should be on the same scale as the spread of the target, $p_t$.
Commonly used noising schedules have $s(t) \leq 1$, and our data distribution is normalised so the scale of $p_t$ is on the order of $\sigma(t)$. 
We therefore set $v(t) = \sigma(t)$. 
This results in a $\sigma(t')$-weighted divergence for our incremental cost $\mathcal{L}(t, t') = \sigma(t')^2 D(p_{t'} || p_t)$. 
This can be compared to the weighted denoising score matching loss used to train DDMs \citep{song2020score}, which is also a squared norm of score differences: $\lambda(t) \mathbb{E}_{X_0, X_t} \left[ \| s_\theta(X_t, t) - \nabla \log p_{t|0}(X_t | X_0) \|^2 \right]$ for some weighting function $\lambda(t)$ chosen to equalise the magnitude of the cost over the path. 
In \cite{song2020score}, $\lambda(t) \propto 1 / \mathbb{E} [ \| \nabla \log p_{t|0}(X_t | X_0) \|^2 ]$ was chosen, which, as we show in Appendix \ref{sec:dsm_weighting}, is $\lambda(t) \propto \sigma^2(t)$. 
This choice of velocity scaling provides an alternative perspective on this commonly used weighting of squared norms of score differences. 

\begin{algorithm}[H]
\caption{\texttt{UpdateSchedule}}\label{alg:beta}
\begin{algorithmic}[1]
\Require Schedule $\mathcal{T}=\{t_i\}_{i=0}^T$, incremental costs $\{\mathcal{L}(t_{i+1},t_i)\}_{i=0}^{T-1}$
    \State $\hat\Lambda(t_i)= \sum_{j=0}^{i-1} \sqrt{\mathcal{L}(t_{j+1},t_j)},\quad i=0,\dots T$\Comment{\Cref{eq:optimal-schedule-generator}};
    \State $\hat\Lambda=\hat\Lambda(t_T)$\Comment{$\Lambda$ in \Cref{def:lambda-continuous}}
    \State $\hat\Lambda^{-1}(\cdot)=\texttt{Interpolate}(\{(\hat{\Lambda}(t_0),t_0),\dots, (\hat\Lambda(t_T),t_T)\})$; \Comment{E.g. \cite{fritsch1980monotone}}
    \State $t_i^* = \hat\Lambda^{-1}(\hat\Lambda\frac{i}{T}),\quad i=0,\dots, T$ \Comment{\Cref{eq:optimal-schedule-generator}}
    \State \textbf{Return:} $\mathcal{T}^*=\{t_i^*\}_{i=0}^T$
\end{algorithmic}
\end{algorithm}

\subsection{Related Work}
Previous works have devised algorithms and heuristics for designing noising and discretisation schedules. 
The DDM training objective is invariant to the noising schedule shape, as demonstrated by \cite{kingma2021variational}, necessitating auxiliary costs and objectives for schedule design.
Uniform steps in the signal-to-noise ratio, $\log \left(s(t_i)/\sigma(t_i)\right)$, are used by \cite{lu_dpm-solver_2022}, but this ignores the target distribution's geometry.
\cite{watson2021learning} optimise the schedule by differentiating through sampling to maximise quality, but GPU memory constraints necessitate gradient rematerialisation.
We avoid this with a simulation-free cost. 
Closely related to our work is \cite{sabour2024align}, who minimise a pathwise KL-divergence between discretised and continuous processes. They require multi-stage optimisation with early stopping to prevent over-optimisation of their objective which would otherwise result in worse schedules. 
Amongst the wider literature, various strategies for discretisation schedule tuning have been proposed.
\cite{das2023image} derive an equally spaced schedule using the Fisher metric but assume Gaussian data. 
\cite{santos2023blackout} assign time points proportional to the Fisher information of $p_{t|0}(x_t \mid x_0)$, ignoring the true target distribution. 
\cite{xue2024accelerating} derive a schedule to control ODE simulation error, but their cost depends only on the ODE solver, and not on the data distribution.

\section{Computational Experiments}\label{sec:experiments}

\subsection{Sampling the Mollified Cantor Distribution}\label{ex:cantor}

The Cantor distribution \citep{cantor1884} lacks a Lebesgue density, with its cumulative distribution function represented by the Devil's staircase and its support being the Cantor set, forming a challenging 1-D test example. 
When mollified with Gaussian noise, it becomes absolutely continuous and possesses a Stein score.
We mollify by running a diffusion with the linear schedule for time $t=10^{-5}$.
With this mollification, our data density has eight pronounced peaks. 
We train a one-dimensional DDM for 150,000 iterations using both a fixed linear schedule and our optimisation algorithm \Cref{alg:training} initialised at the linear schedule.
We find that the non-data-specific default schedule fails to capture these modes, whilst our adaptive method faithfully reproduces the data distribution. In \Cref{fig:cantor_score_evolve} we show the complexity of the learned score which displays a self-similar fractal structure.

\subsection{Adaptive Schedule Learning for Bimodal Example}

We train a DDM on a simple bimodal Gaussian distribution. 
When the variance of the target bimodal Gaussian is low, it becomes difficult to adequately sample from the target distribution. 
In our instance, the standard Gaussian reference from the diffusion is given, and the target is the density \( p_0(x) = \frac{1}{2} p_{\text{left}}(x) + \frac{1}{2} p_{\text{right}}(x) \), where \( p_{\text{left}} \) and \( p_{\text{right}} \) are normal distributions with means \(-6\) and \(6\), respectively, and a common variance \(\sigma^2 = 0.1^2\).

We learn two diffusion models, one using the linear schedule, and the other using a schedule that is learned online during training.
We compute the likelihood of the samples generated from either model during training, which is possible in this example because the true probability density is known. 
It can be seen in \Cref{fig:sschedule_comparison} that when the schedule is learned during training, the likelihood evaluation increases and the true score error decreases, in contrast to the linear schedule that remains constant, or worsens, in this regard during training. 

\begin{figure}[h!]
    \centering
    \includegraphics[width=\textwidth]{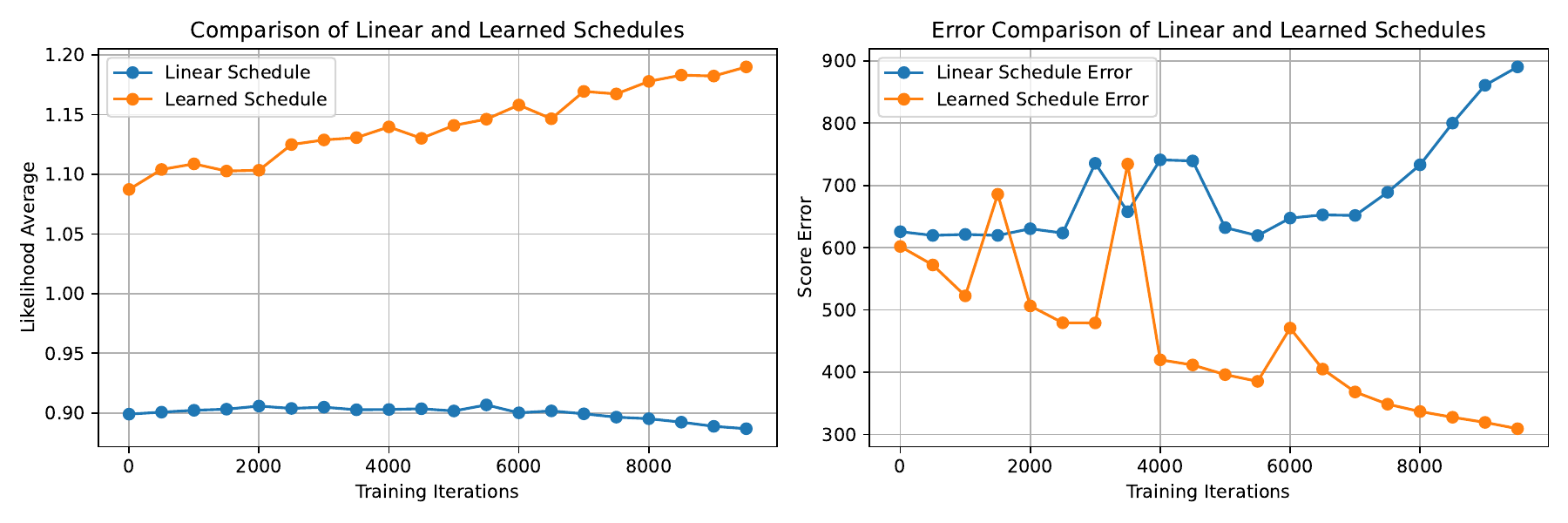}
    \caption{Comparison of Linear and Learned Schedules over Training Iterations for the bimodal example. Each point corresponds to 500 training iterations.}
    \label{fig:sschedule_comparison}
\end{figure}

\subsection{Scalable Schedule Learning Diffusion}\label{sec:experiments-adaptive-learning}

Here we demonstrate that jointly learning the schedule and score using our online training methodology (\Cref{alg:training}) scales to high-dimensional data and converges to a stable solution.
We train DDMs on CIFAR-10 and MNIST initialised at the cosine schedule using the codebase from \cite{nichol2021improved}.
In \Cref{fig:learned_schedule_compare} (left), we show the incremental costs $\sqrt{\mathcal{L}(t_{j+1}, t_j)}$ for the cosine schedule and our learned schedule, finding that the increments approximately equalise over the diffusion path as expected by the discussion in \Cref{sec:diffusion_schedule_path_length_and_energy}. \Cref{fig:learned_schedule_compare} (right) shows the learned schedule spends more time at high-frequency details, we visualise a sampling trajectory in \Cref{fig:MNIST_progression}.

\begin{figure}[H]
    \centering
    \includegraphics[width=\textwidth]{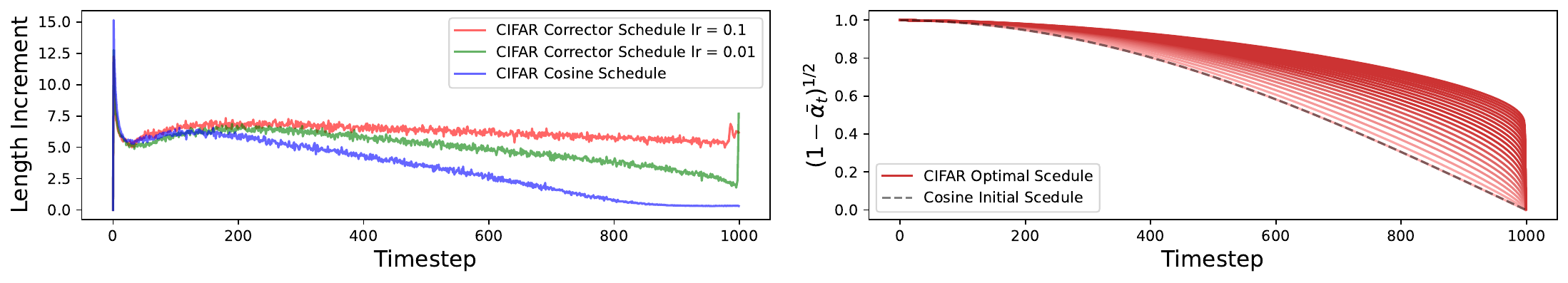}
    \caption{ \textbf{(Left)} Incremental costs $\sqrt{\mathcal{L}(t_{j+1}, t_j)}$ for the cosine schedule and our online adaptive algorithm. Higher learning rates enforce equalisation of costs more quickly. \textbf{(Right)} Progression of the learned schedule during 40k training iterations, depicted through the standard-deviation $\sqrt{1 - \bar{\alpha}_t}$.
    }
    \label{fig:learned_schedule_compare}
\end{figure}

\subsection{Sampling Pre-Trained Models}\label{sec:experiments-pre-trained}

\begin{table}
   \caption{Sample quality measured by Fréchet Inception Distance (FID) versus schedule on CIFAR10 ($32 \times 32$), FFHQ, AFHQv2, ImageNet ($64 \times 64$). %
    Pretrained models are used from \cite{karras2022elucidating}. All FIDs are calculated using 50000 samples. We highlight the best FID in \textbf{bold}. 
    The ImageNet model lacks second-order differentiation, so no predictor optimised schedule is shown.}
  \label{tab:image_fids}
    \vspace{1em}
  \centering
  \begin{tabular}{lcccc}
    \toprule
    Schedule & CIFAR-10 & FFHQ & AFHQv2 & ImageNet \\
    \midrule
    Eq \eqref{eq:karras_schedule} $\rho=3$ & $5.47$  & $2.80$  & $\mathbf{2.05}$ & $1.46$   \\
    Eq \eqref{eq:karras_schedule} $\rho=7$ & $\mathbf{1.96}$ & $2.46$  & $\mathbf{2.05}$ & $\mathbf{1.42}$    \\
    LogLinear \citep{lu_dpm-solver_2022}   & $2.05$       & $\mathbf{2.42}$ & $2.06$ & $1.45$ \\
    Convex Schedule & $22.1$ & $2.43$ & $2.48$ & $1.64$ \\
    \midrule
    Corrector optimised & $1.99$ & $2.46$ & $\mathbf{2.05}$ & $1.44$ \\
    Predictor optimised & $1.99$ & $2.48$  & $\mathbf{2.05}$ & - \\
    \bottomrule
  \end{tabular}
\end{table}

\captionsetup[subfigure]{labelformat=empty}
\begin{figure}
\centering
\subfloat[]{
    \begin{small}
    \begin{tabular}{lcccc}
    \toprule
         \makecell{Schedules from \\ low to high FID} & FID & \makecell{CO-Cost\\ ($\times 10^3$)} & \makecell{PO-Cost\\($\times 10^3$)} & \makecell{KLUB \\ ($\times 10^6$)}\\
    \midrule
    Eq \eqref{eq:karras_schedule} $\rho=7$ & $\mathbf{1.96}$ & $9.86$ & $2.16$ & $1.75$ \\
    CO (ours) & $1.99$ & $\mathbf{9.54}$ & $\mathbf{2.09}$ & $1.39$\\
    PO (ours) & $1.99$ & $9.61$ & $\mathbf{2.09}$ & $1.39$\\
    LogLinear & $2.05$ & $10.5$ & $2.32$ & $1.05$ \\ %
    Eq \eqref{eq:karras_schedule} $\rho=3$ & $5.47$ & $17.2$ & $4.02$ & $2.82$ \\
    Convex Schedule & $22.1$ & $45.0$ & $8.17$ & $\mathbf{0.284}$ \\
    \bottomrule
    \end{tabular}
    \end{small}

}
\subfloat[]{
\raisebox{-.5\height}{\includegraphics[trim= 0 0 0 0, width=0.35\textwidth]{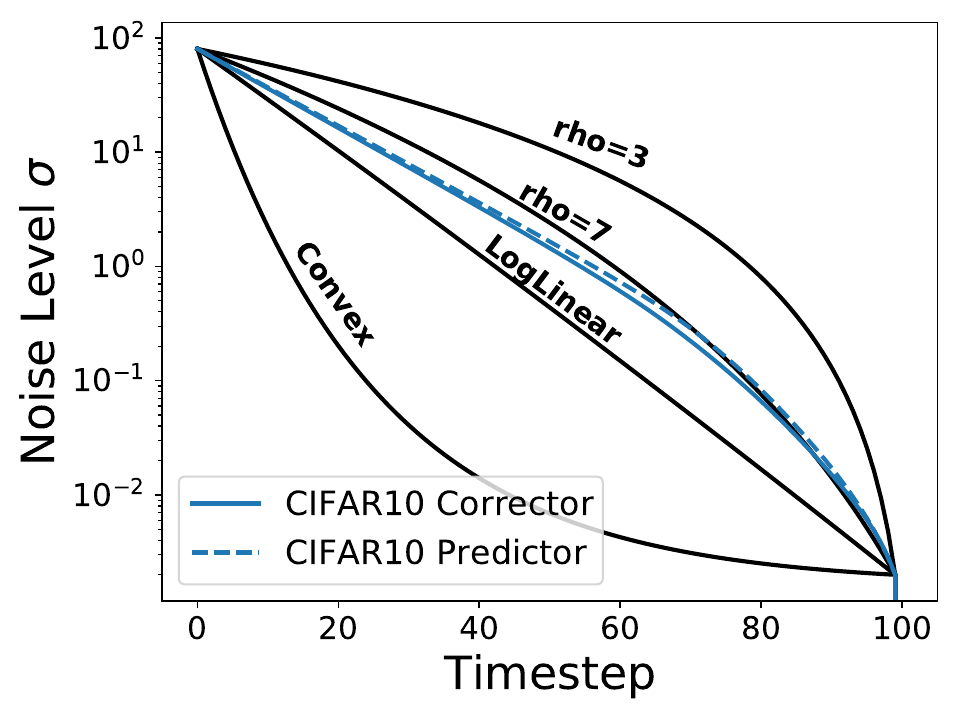}}
}
\vspace{-0.5cm}

    \caption{ \textbf{(Left)} Costs associated with different schedule choices for the CIFAR10 dataset. 
    Schedules are ordered from lowest FID to highest FID. 
    We compare our Corrector-optimised (CO) cost and Predictor-optimised (PO) cost versus the Kullback-Leibler Upper Bound (KLUB) from \cite{sabour2024align}. 
    The minimum value for each cost is highlighted in \textbf{bold}. 
    Note low cost is associated with low FID for our cost and not for the KLUB.
    \textbf{(Right)} Visualisation of schedules during generative sampling with $100$ timesteps. ``rho=3'' and ``rho=7'' refer to Eq \ref{eq:karras_schedule} with $\rho=3$ and $\rho=7$ respectively. 
    LogLinear from \cite{lu_dpm-solver_2022} and a convex schedule are also shown. 
    We show our cost optimised schedules for CIFAR10 both using the corrector optimised cost and the predictor optimised cost.
    }
\label{fig:costs_and_schedules}
\end{figure}

In this experiment we demonstrate that our algorithm can recover performant schedules for large image models used in practice and our schedules generate high quality samples.
We use the pre-trained models from \cite{karras2022elucidating}, whose DDM is parameterised such that the forward noising distribution is of the form $p_{t|0}(x_t | x_0) = \mathcal{N}(x_t; x_0, \sigma_t^2 I)$. The scheduling problem then reduces to deciding on a stepping scheme through $\{\sigma_i\}_{i=1}^N$, $\sigma_N = 0$. \cite{karras2022elucidating} suggest a polynomial based schedule with a parameter $\rho$ that controls the curvature of the schedule 
\begin{equation}
\textstyle
    \sigma_{i < N} = \Big( \sigma_{\text{max}}^{\frac{1}{\rho}} + \frac{i}{N-1} \Big( \sigma_{\text{min}}^{\frac{1}{\rho}} - \sigma_{\text{max}}^{\frac{1}{\rho}} \Big) \Big)^\rho \quad \text{and} \,\, \sigma_N = 0.
    \label{eq:karras_schedule}
\end{equation}
A lower $\rho$ value results in steps near $\sigma_{\text{min}}$ being shortened and steps near $\sigma_{\text{max}}$ being lengthened. 
Through analysing the truncation error for sampling in \cite{karras2022elucidating}, they find that setting $\rho=3$ approximately equalises this error, however it is found empirically that $\rho=7$ results in better sample quality. 
We also compare against a schedule that takes uniform steps in $\log \sigma$ space \cite{lu_dpm-solver_2022} which we refer to as the LogLinear schedule and a schedule that takes a convex shape in log space. 
Schedule visualisations are provided in Figure \ref{fig:costs_and_schedules} (right).

We sampled the pre-trained models using these schedules and computed the sample quality using FID. 
We use the same number of schedule steps (18 for CIFAR10, 40 for FFHQ and AFHQv2, 256 for ImageNet) and solver (Heun second order) as \cite{karras2022elucidating}. 
Our results are shown in Figure \ref{fig:costs_and_schedules}. Our optimised schedules are able to achieve competitive FID to the best performing $\rho=7$ schedule hand-tuned in \cite{karras2022elucidating}. 
This is expected as our schedules take a similar shape to the $\rho=7$ schedule as shown in Figure \ref{fig:costs_and_schedules} (right). 
Therefore, our method provides an entirely automatic and hyperparameter free algorithm to recover this performant schedule that was previously only discovered through trial-and-error. 

We further analyse how the number of discretisation points, $T$, used during sampling affects the quality of generated samples for different schedules. We report our results on CIFAR10 in \Cref{tab:FID_refine}.
Notably, the FID decreases with $T$ for all schedules and achieves comparable FID once $T$ is large enough. 
However, when $T$ is small, only the optimised schedules maintain stable performance. 
This empirically demonstrates an optimised schedule can improve the sampling efficiency by allowing for coarser discretisations and, hence, faster sampling, as predicted by \Cref{thm:geodesics}. 
We observe an identical trend for sFID in \Cref{tab:sFID_refine} in the \Cref{app:pretrained}.

\begin{table}[H]
\centering
\begin{tabularx}{\textwidth}{l*{6}{>{\centering\arraybackslash}X}}
\toprule
\# points, $T$ & \textbf{10}  & \textbf{20} & \textbf{30} & \textbf{50} & \textbf{100} \\
\midrule
CO (ours)  & $\textbf{2.46}$ & $2.02$ & $\textbf{2.04}$ & $2.06$ & $2.07$\\
$\rho = 3$   & $50.75$ & $3.92$ & $2.09$ & $\textbf{2.01}$ & $\textbf{2.05}$\\
$\rho = 7$   & $2.70$ & $\textbf{2.00}$ & $2.06$ & $2.05$ & $2.07$\\
$\rho = 100$ & $3.09$  & $2.06$ & $2.05$ & $2.06$ & $2.07$ \\
\bottomrule
\end{tabularx}
\vspace{1em}
\caption{Comparison of FID across different amounts of discretisation points for different schedules on CIFAR10.
 CO stands for our corrector optimised schedule. }
\label{tab:FID_refine}
\end{table}
We also compare corrector optimised schedules to predictor optimised schedules in \Cref{tab:image_fids}. They provide similar performance so, on image datasets, we encourage the use of the cheaper to compute corrector optimised schedule. Finally, in  \Cref{fig:costs_and_schedules} (left), we report the raw values of our corrector optimised costs and compare these costs to the values of the objective introduced in \cite{sabour2024align}. 
Both algorithms aim to find schedules that minimise these costs and therefore it is desirable for low values of cost to be associated with good sample quality (i.e. low FID). 
We find that low values of our cost correlate much more closely with low FID than the objective introduced by \cite{sabour2024align}. 
Indeed, \cite{sabour2024align} introduce a bespoke multi-stage optimisation for their cost because they found over-optimising their objective can lead to worse schedules which is explained by the objective not correlating well with FID. We further find that our predictor optimised costs are lower than the corrector optimised costs which is to be expected as the predictor reduces the work done by the corrector and thus reduces the incremental cost. The overall shape of schedule, however, between the corrector optimised and predictor optimised costs is similar.

\section{Discussion}
\label{sec:discussion}
We have introduced a method for selecting an optimal DDM discretisation schedule by minimising a cost linked to the work done in transporting samples along the diffusion path.
Our algorithm is computationally cheap and does not require hyperparameter tuning. 
Our learned schedule achieves competitive FID scores. 
Regarding limitations, the computation of $\mathcal{L}_p$ can be computationally expensive due the calculation of second derivatives, however, in \Cref{sec:experiments-pre-trained} we found $\mathcal{L}_c$ to provide a cheap and performant alternative. Furthermore, our theory is derived assuming perfect score estimation. 
Future work can expand on the geometric interpretation of the diffusion path and links to information geometry to further refine the DDM methodology.

\newpage

\bibliographystyle{apalike}

\newpage

\appendix

\section{Analysis of incremental cost}\label{app:theory}

\subsection{Proof of \Cref{thm:incremental}}

\begin{proof}
    We first note that by the mean value theorem, there exists $s(t,t')\in[t,t']$
    \begin{align}
    \frac{\mathcal{L}(t,t')}{|t'-t|^2}
    &=v(t')^2\E_{X_t\sim p_t}\left[\left\|\frac{\nabla G_{s(t,t')}(X_t)}{t'-t}\right\|^2\right]\\
    &=v(t')^2\E_{X_t\sim p_t}\left[\left\|\nabla \dot G_{s(t,t')}(X_t)\right\|^2\right]
    \end{align}
    Since all $t,t'$ we have $\|\nabla \dot G_s(t,t')(x)\|^2\leq V(x)$, and $\E_{X_t\sim p_t}[V(X_t)]<\infty$, the dominated convergence theorem and the continuity of $v(t')$ implies,
    \begin{align}
        \lim_{t'\mapsto t}\frac{\mathcal{L}(t,t')}{|t'-t|^2}
    &=v(t)^2\E_{X_t\sim p_t}\left[\left\|\lim_{t'\mapsto t}\nabla \dot G_{s(t,t')}(X_t)\right\|^2\right]\\
    &=v(t)^2\E_{X_t\sim p_t}\left[\left\|\nabla \dot G_{s(t,t')}(X_t)\right\|^2\right].
    \end{align}
    The last equality follows since $\nabla \dot G_t$ is continuous in $t$. 
    
    We will now compute $\dot G_t$. Since $F_{t,t}(x)=x$, we have $G_{t,t}(x)=1$ for all $x$. This implies, we can express $\dot G_t$ in terms of the derivative of $\log G_{t,t'}$,
    \begin{align}\label{eq:dotG_t-logform}
    \dot G_t(x) 
    &= \left.\frac{\partial}{\partial t'}\log G_{t,t'}(x)\right|_{t'=t},
    \end{align}
    where $\log G_{t,t'}$ equals,  
    \begin{align}\label{eq:log_G_t}
        \log G_{t,t'}(x)=\log p_{t'}(F_{t,t'}(x))-p_t(x)+\log \det\nabla F_{t,t'}(x).
    \end{align}
    By combining \Cref{eq:dotG_t-logform} to \Cref{eq:log_G_t}, we obtain,
    \begin{align}\label{eq:dotG-decomp}
    \dot G_t(x) 
    &= \left.\frac{\partial}{\partial t'}\log p_{t'}(F_{t,t'}(x))\right|_{t'=t} + \left.\frac{\partial}{\partial t'}\log \det\nabla F_{t,t'}(x)\right|_{t'=t} .
    \end{align}
    For the first term in \Cref{eq:dotG-decomp}, we use chain rule to obtain,
    \begin{align}
    \frac{\partial}{\partial t '} \log  p_{t'}(F_{t,t'}(x)) 
    =\frac{1}{p_{t'} ( F_{t,t'} (x))}
        \left(\frac{\partial p_{t'} }{\partial t '} (F_{t,t'} (x)) 
        +\nabla p_{t'} (F_{t,t'}(x)) \cdot \frac{\partial F_{t,t'}}{\partial t'}  (x)\right),  \label{eq:derivative-first-term}
    \end{align}
    where `$\cdot$' denotes a dot product of vectors.
    By evaluating at $t'=t$, we have 
    \begin{align}
        \left.\frac{\partial}{\partial t '} \log  p_{t'}(F_{t,t'}(x))  \right|_{t' = t} 
        &=\frac{1}{p_t(x)}\frac{\partial p_t}{\partial t}(x)+\frac{1}{p_t(x)}\nabla p_t(x)\cdot \dot F_t(x)\\
        &=\frac{\partial }{\partial t}\log p_t(x)+\nabla \log p_t(x)\cdot \dot F_t(x). \label{eq:dotG-simple1}
    \end{align}
    For the second term in \Cref{eq:dotG-decomp}, we note that as $\Delta t=t'-t\to 0$
    \begin{align}
        F_{t,t'}(x)=x+\dot F_t(x)\Delta t+O(\Delta t^2).
    \end{align}
    This implies that the Jacobian determinant admits the following asymptotic expansion, 
    \begin{align}
        \log \det\nabla F_{t,t'}
        &=\log\det(I+\nabla \dot F_t\Delta t+o(\Delta t))\\
        &=\log(1+\mathrm{Tr} \nabla \dot F_t\Delta t+o(\Delta t))\\
        &=\mathrm{Tr} \nabla \dot F_t\Delta t+O(\Delta t^2).
    \end{align}
    Consequentially we have, 
    \begin{align}
    \left.\frac{\partial}{\partial t' }
    \log \det \nabla F_{t,t'} (x) \right|_{t' = t} 
    &= \mathrm{Tr}\nabla \dot F_t(x).\label{eq:dotG-simple2}
    \end{align}
    By substituting in \Cref{eq:dotG-simple1,eq:dotG-simple2} into \Cref{eq:dotG-decomp} we obtain,
    \begin{align}
        \dot G_t(x)
        &= \frac{\partial}{\partial t}  \log p_{t}  (x)
        + \nabla \log p_{t} (x) \cdot \dot{F}_t (x) +  \mathrm{Tr}\nabla \dot F_t(x).
    \end{align}
\end{proof}
\subsection{Proof of \Cref{thm:geodesics}}
\begin{proof}
    Let $s_i=i/T$ and $t_i =\varphi(s_i)$, \Cref{thm:incremental} implies 
    \begin{align}
    \mathcal{L}(t_{i+1},t_i)&=\delta(t_{i+1})\Delta t_i^2+o(\Delta_T^2),\label{eq:L_i-approx}\\
    \sqrt{\mathcal{L}(t_{i+1},t_i)}&=\sqrt{\delta(t_{i+1})}\Delta t_i+o(\Delta_T),\label{eq:L_i-approx-sqrt}
    \end{align}
    where $\Delta_T=\max_{i} |\Delta t_i|$. By the the mean value theorem, $\Delta_T\leq \sup_{s\in[0,1]}\dot\varphi(s)/T$ and hence is $O(T^{-1})$ as $T\to\infty$.

    We will first establish the convergence of $\Lambda(\varphi,T)$. Using \Cref{eq:L_i-approx-sqrt} we obtain the following estimate for $T\mathcal{L}(\varphi,T)$, 
    \begin{align}
         \Lambda(\varphi,T) 
         =\sum_{i=0}^{T-1}\sqrt{\mathcal{L}(t_{i+1},t_i})
         =\sum_{i=0}^{T-1}\sqrt{\delta(t_{i+1})}\Delta t_i+o(1).
    \end{align}
    In the limit as $T\to\infty$, this Riemann sum converges to $\Lambda$,
    \begin{align}
        \lim_{T\to\infty}\Lambda(\varphi,T) = \int_0^1\sqrt{\delta(t)}\rmd t.
    \end{align}

    We will now obtain the limit of $T\mathcal{L}(\varphi,T)$. First denote $s_i=i/T$ and $\Delta s_i=1/T$. Using the differentiability of $\varphi$ we have, 
    \begin{align}
           \Delta t_i =\varphi(s_{i+1})-\varphi(s_i)=\frac{\dot\varphi(s_{i+1})}{T}+ o\left(T^{-1}\right).
    \end{align}
    Substituting \Cref{eq:delta-t_taylor} into \Cref{eq:L_i-approx}, we obtain the following estimate for $T\mathcal{L}(\varphi,T)$, 
    \begin{align}
    T\mathcal{L}(\varphi,T)
    &=T\sum_{i=0}^{T-1}\mathcal{L}(t_{i+1},t_i)\\
    &=T\sum_{i=0}^{T-1}\delta(t_{i+1})\Delta t_i^2+o(T^{-1})\\
    &=\sum_{i=0}^{T-1}\delta(\varphi(s_{i+1}))\dot\varphi(s_{i+1})^2\Delta s_i+o(1).\label{eq:delta-t_taylor}
    \end{align}
    In the limit as $t\to\infty$, this converges to the integral for $E(\varphi)$,
    \begin{align}
    T\mathcal{L}(\varphi,T) = \int_0^1\delta(\varphi(s))\dot\varphi(s)^2\rmd s=E(\varphi).
    \end{align}

    Combining Jensen's inequality with the fact that $\varphi$ is increasing implies, 
    \begin{align}
    E(\varphi)
    =\int_0^1\delta(\varphi(s))\dot\varphi(s)^2\rmd s
    \geq \left(\int_0^1\sqrt{\lambda(\varphi(s))}\dot\varphi(s)\rmd s\right)^2
    =\left(\int_0^1\sqrt{\delta(t)}\rmd t\right)^2
    =\Lambda^2.
    \end{align}
    The last equality follows by substituting $t=\varphi(s)$. Note a schedule generator $\varphi^*$ obtains the Jensen lower bound if only if there is a $C$ such that for all $s\in[0,1]$,
    \begin{align}
        C=\delta(\varphi^*(s))\dot\varphi^*(s)^2.
    \end{align}
    By taking square roots and integrating from $0$ to $s$,
    \begin{align}\label{eq:optimal-constraint}
        \sqrt{C} s =\int_0^s\sqrt{\delta(\varphi^*(s'))}\dot\varphi^*(s')\rmd s'
        =\int_0^{\varphi^*(s)}\sqrt{\delta(t)}\rmd t=\Lambda(\varphi^*(s)).
    \end{align}
    By using the substitution in $s=1$, along with the constraints $\Lambda(1)=\Lambda$ and $\varphi^*(1)=1$ we obtain $\sqrt{C}=\Lambda$,
    \begin{align}
        \sqrt{C}=\Lambda(\varphi^*(1))=\Lambda(1)=\Lambda.
    \end{align}
    Finally, by inverting \Cref{eq:optimal-constraint}, we conclude our proof,
    \begin{align}
        \varphi^*(s)=\Lambda^{-1}(\sqrt{C} s)=\Lambda^{-1}(\Lambda s).
    \end{align}
\end{proof}

\subsection{Comparison to Fisher Information}

When $F_{t,t'}(x)=x$, the quantity $G_{t,t'}(x)=p_{t'}(x)/p_t(x)$ reduces to the Radon--Nykodym derivative between $p_{t'}$ and $p_t$, and $\dot G_t(x)$ reduce to the Fisher score function, $\frac{\partial}{\partial t}\log p_t(x)$. By \Cref{thm:incremental} the corrector optimised cost satisfies, $\mathcal{L}_c(t,t')=\delta_c(t)\Delta t^2+o(\Delta t)$, where  
\begin{align}
    \delta_c(t)=v(t)^2\E_{X_t\sim p_t}\left[\left\|\nabla \dot G_t^2 \right\|^2\right]=v(t)^2\E_{X_t\sim p_t}\left[\left\|\nabla \frac{\partial}{\partial t}\log p_t \right\|^2\right].
\end{align}
Suppose $p_t$ is sufficiently regular, 
 and the Poincar\'{e} inequality \citep{poincare1890equations} holds. Since the expectation of the Fisher score with respect to $p_t$ is zero, we have,
\begin{align}\label{eq:fisher}
    \delta_c(t)
    \geq C(t) v(t)\E_{X_t\sim p_t}\left[\left(\frac{\partial}{\partial t}\log p_t\right)^2\right]
    = C(t) v(t)\delta_F(t),
\end{align}
where $C(t)>0$ is some constant. Here $\delta_F(t)=\mathrm{Var}_{X_t\sim p_t}\left[\frac{\partial}{\partial t}\log p_t\right]$ is the Fisher information for the diffusion path $p_t$. Suppose we view the diffusion path as a curve in the probability distribution space with metric $\delta_c(t)$. \Cref{eq:fisher} shows that the topology induced by $\delta_c(t)$ is stronger than the Fisher information, and geometry is more regular.

\section{Training Algorithms}
\subsection{Adaptive training}
\label{sec:adaptive_training}

We may incorporate \Cref{alg:beta} into an online algorithm used during training. For a fixed score function along the diffusion path, there is an optimal schedule. Similarly, for a fixed schedule, there is an optimal score function that can be learnt from the data. To incorporate these two steps, we propose a two-step algorithm for online training.

First, for a fixed schedule, we optimise the score function. Then, using this estimated score function, we compute the optimal schedule through \Cref{alg:beta}. To add regularity throughout training, as our score predictions are over batches rather than the entire dataset, we do not replace the current schedule with our computed optimal one. Instead, we take a weighted combination of the current schedule with the computed optimal one.

The weighting factor $\gamma \in (0,1)$ is akin to a learning rate for the schedule optimisation. 
If $\gamma$ is set too high, our schedule learning may be overly influenced by the current batch, which could negatively affect the score training performance.

\begin{algorithm}[H]
\caption{\texttt{AdaptiveScheduleTraining}}\label{alg:training}
\begin{algorithmic}[1]
\Require Initial schedule $\mathcal{T}=\{t_i\}_{i=0}^T$, learning rate $\gamma\in (0,1)$, score estimate $s_{\theta}$
\While{not converged}
    \For{each batch $B$ from data}
        \State Fix $\mathcal{T}$ and assign $\theta \leftarrow \mathrm{argmin}_\theta \mathcal{L}_{\mathrm{training}}(\theta,B,\mathcal{T})$
        \State Fix $s_{\theta^*}$ and over batch estimate $\mathcal{L}(t_{i+1},t_i),\quad i=0,\dots, T-1$ \hfill (\Cref{eq:loss-expectation-predictor})
        \State Assign  $\mathcal{T}^* \leftarrow \texttt{UpdateSchedule}(\mathcal{T},\mathcal{L}(t_{i+1},t_i))$
        \State Update time locations $t_i \gets \gamma t_{i}^* + (1 - \gamma) t_i$
    \EndFor
\EndWhile
\end{algorithmic}
\end{algorithm}

\subsection{Estimating predictor optimised cost}

\begin{prop}\label{prop:score_jacobian}
Let $F_{t,t'}$ be the predictor map given by the forward Euler discretisation \eqref{eq:probability-flow-ODE-euler} of the probability flow ODE. For $N\in\mathbb{N}$ and let $\hat{J}_{t,N}(x)$ to be the Jacobian of the Hutchinson trace estimator \citep{hutchinson1989} for $\nabla(\nabla \log p_t(x)))$ at $x\in\mathbb{X}$ and $t\in[0,1]$,
\begin{align}
    \hat{J}_{t,N}(x)=\frac{1}{N}\sum_{n=1}^N\nabla(v_n^TJ_t(x)v_n),\quad v_n\sim \mathcal{N}(0,I).
\end{align}
If $\Delta t$ is small enough such that,
\begin{align}\label{eq:assumption-taylor}
\Delta t\mathrm{Tr} \left(f(t)I -\frac{1}{2}g(t)^2\nabla^2 \log p_t(x) \right) <1.
\end{align}
Then, as $N\to\infty$ the following limit exists almost surely,
\begin{align}\label{eq:score_jacobian}
\nabla \log  \det \nabla F_{t,t'}(x)
=-\frac{\Delta t}{2}g(t)^2\lim_{N \to \infty} \hat{J}_{t,N}+ O(\Delta t ^2).
\end{align}  
\end{prop}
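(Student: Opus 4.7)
The plan is to Taylor-expand $\log\det\nabla F_{t,t'}(x)$ in $\Delta t$, differentiate in $x$, and then recognise the surviving trace term as the quantity estimated by $\hat J_{t,N}$.

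First, I would differentiate the explicit Euler predictor \eqref{eq:probability-flow-ODE-euler} to obtain
\begin{align}
\nabla F_{t,t'}(x) = I + \Delta t\, A_t(x), \qquad A_t(x) := f(t)I - \tfrac{1}{2} g(t)^2 \nabla^2 \log p_t(x).
\end{align}
Using $\log\det(I + M) = \mathrm{Tr}\log(I + M)$ together with the series $\log(I + M) = \sum_{k \geq 1} (-1)^{k+1} M^k / k$, the smallness hypothesis \eqref{eq:assumption-taylor} makes the expansion meaningful to leading order and gives
\begin{align}
\log\det \nabla F_{t,t'}(x) = \Delta t\, \mathrm{Tr}\, A_t(x) + O(\Delta t^2),
\end{align}
with the remainder controlled via bounds on $\mathrm{Tr}(A_t(x)^k)$ for $k \geq 2$.

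Second, I would apply $\nabla_x$ to both sides. The $d f(t)$ contribution to $\mathrm{Tr}\, A_t(x)$ is constant in $x$ and drops out under $\nabla$, leaving
\begin{align}
\nabla \log\det \nabla F_{t,t'}(x) = -\tfrac{1}{2}\Delta t\, g(t)^2\, \nabla \mathrm{Tr}\bigl(\nabla^2 \log p_t(x)\bigr) + O(\Delta t^2).
\end{align}
The subtle point here is to verify that differentiating the $O(\Delta t^2)$ tail in $x$ still produces an $O(\Delta t^2)$ quantity; this follows by term-by-term differentiation of the series for $\log(I + \Delta t\, A_t)$ under \eqref{eq:assumption-taylor}, paired with uniform bounds on the spatial derivatives of $A_t(x)$ that come from the assumed smoothness of $p_t$.

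Finally, I would rewrite the surviving trace using Hutchinson's identity: for $v \sim \mathcal{N}(0,I)$ and $J_t(x) := \nabla^2 \log p_t(x)$, we have $\mathrm{Tr}(J_t(x)) = \E_v[v^T J_t(x) v]$. A standard dominated-convergence argument, using Gaussian moments of $v$ together with smoothness of $p_t$, permits swapping $\nabla$ with $\E_v$ and yields $\nabla \mathrm{Tr}(J_t(x)) = \E_v[\nabla(v^T J_t(x) v)]$. The strong law of large numbers for the iid draws $\{v_n\}$ then gives $\hat J_{t,N}(x) \to \nabla \mathrm{Tr}(J_t(x))$ almost surely, and substitution into the previous display yields \eqref{eq:score_jacobian}. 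The main obstacle I anticipate is the careful bookkeeping required to keep the $O(\Delta t^2)$ remainder tight after differentiation in $x$, i.e. justifying the interchange of $\nabla$ with the Neumann-type series under \eqref{eq:assumption-taylor}; the Hutchinson identity and SLLN steps are essentially routine under the stated regularity.
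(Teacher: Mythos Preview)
Your proposal is correct and follows essentially the same route as the paper: compute $\nabla F_{t,t'}=I+\Delta t\,A_t$, expand $\log\det$ to first order in $\Delta t$, differentiate in $x$ so that the $f(t)I$ contribution vanishes, and then identify $\nabla\mathrm{Tr}(\nabla^2\log p_t)$ with the Hutchinson limit. If anything, you are more explicit than the paper about the two points it leaves implicit---that the $O(\Delta t^2)$ remainder survives differentiation in $x$, and that the almost-sure convergence comes from the SLLN applied to the iid terms $\nabla(v_n^T J_t v_n)$.
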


\begin{proof}

The probability flow ODE update is given through 
\begin{equation}
    F_{t,t'}(x) = x + \Delta t \left( f(x) x - \frac{1}{2}g(t)^2 \nabla \log p_t(x) \right). 
\end{equation}  
By taking the  gradient, we have, 
\begin{align}
    \nabla F_{t,t'}(x) = I + \Delta t \left( f(t)I - \frac{1}{2}g(t) \nabla^2 \log p_t(x) \right).
\end{align}
Since $\log\det(1+\Delta t)=\Delta t\mathrm{Tr}A+O(\Delta t^2)$ that holds when $\mathrm{Tr}(A)<1$, our assumption in \Cref{eq:assumption-taylor} hold, we can apply a Taylor expansion 
\begin{align}
    \log \det \nabla F_{t,t'}(x) = \Delta t \mathrm{Tr} \left( f(t)I - \frac{1}{2}g(t) \nabla^2 \log p_t(x)  \right) + O(\Delta t^2) .
\end{align}
By taking a gradient, we have 
\begin{align}
    \nabla \log  \det \nabla F_{t,t'}(x)  
    &= - \frac{\Delta t}{2} g(t)^2 \nabla \mathrm{Tr} \left(\nabla^2 \log p_t(X) \right)  + O(\Delta t ^2)\\
    &= - \mathbb{E} (v v^T)\frac{\Delta t}{2} g(t)^2 \nabla \mathrm{Tr} \left(\nabla^2 \log p_t(X) \right)  + O(\Delta t ^2)\\
    &= - \frac{\Delta t}{2} g(t)^2  \mathbb{E} \, \nabla \mathrm{Tr} \left( v^T \nabla^2 \log p_t(X)  v \right) +O(\Delta t^2).
\end{align}
The last line used the fact that the Hessian of $\log p_t$ is equivalently the Jacobian $\nabla( \nabla \log p_t)$, the latter we can approximate unbiasedly using the Hutchinson trace estimator to gain $\hat{J}_{t,N}(x)$. 
\end{proof}

\subsection{Denoising Score Matching Weighting}
\label{sec:dsm_weighting}
The standard denoising score matching weighting used by \cite{song2020score} is $\lambda(t) \propto 1 / \E [ || \nabla \log p_{t|0}(X_t | X_0) ||^2]$. For $p_{t|0}(x_t | x_0) = \mathcal{N}(x_t; s(t) x_0, \sigma^2(t) I)$ we have,
\begin{equation}
    \log p_{t|0}(x_t | x_0) = -\frac{1}{2} \frac{|| x_t - s(t) x_0 ||^2}{\sigma(t)^2} - \frac{d}{2} \log \left(2 \pi \sigma(t)^2 \right).
\end{equation}
Therefore,
\begin{equation}
    \nabla_{x_t} \log p_{t|0}(x_t | x_0) = \frac{s(t)x_0 - x_t}{\sigma(t)^2}
\end{equation}
The expectation for $\E [ || \nabla \log p_{t|0}(X_t | X_0) ||^2]$ is taken with respect to $X_t \sim p_{t|0}$. We therefore have,
\begin{align}
    \E[ || \nabla \log p_{t|0}(X_t | X_0) ||^2 ] &= \E_{p_{t|0}(X_t | X_0)} \left[ \left\| \frac{s(t) X_0 - X_t}{\sigma(t)^2} \right\|^2 \right] \\
    &= \E_{\mathcal{N}(\epsilon; 0, I)} \left[ \left\| \frac{s(t)X_0 - s(t)X_0 - \sigma(t) \epsilon}{\sigma(t)^2} \right\|^2 \right]\\
    &= \E_{\mathcal{N}(\epsilon; 0, I)} \left[ \left\| \frac{- \epsilon}{\sigma(t)} \right\|^2 \right]\\
    &= \frac{1}{\sigma(t)^2} \E_{\mathcal{N}(\epsilon; 0, I)} \left[ \| \epsilon \|^2 \right]\\
    &= \frac{d}{\sigma(t)^2}.
\end{align}
where on the second line we have used the reparameterisation trick. Therefore, we have that the standard weighting for denoising score matching is $\lambda(t) \propto \sigma(t)^2$.

\section{Experiment Details}
\label{sec:experiment_details}

\subsection{1D Density Estimation}

For all 1D experiments, we train a one spatial dimension model with continuous time encoding via Gaussian Fourier features to embed time values into a higher-dimensional space. The model architecture includes a hidden dimension of 128, five layers, an embedding dimension of 12, and one residual time step. 
It features a combination of residual blocks, both incorporating linear layers with GELU activation and LayerNorm, tailored to integrate time embeddings.

\subsubsection{Bimodal Example}

We train our model in the form of \( f(t) = \beta_t/2 \) and \( g(t) = \sqrt{\beta_t} \) using Algorithm \ref{alg:training} with \(\gamma = 0.1\). 
We use the weight function \( v_{t_n}^2 = (1 - \bar{\alpha}_{t_n}) \), where \(\alpha_{t_n} = 1 - \beta_{t_n}\) and \(\bar{\alpha}_{t_n} = \prod_{i=1}^{n} \alpha_{t_i} \). 
We train our model for 5 thousand iterations using both a fixed linear schedule and our optimisation algorithm initialised at the linear schedule. 
Due to the non-linear dependence during training of the transport-optimised schedule, initialisation in this context plays an important role. 
We initialise our predictor-optimised schedule with the optimal schedule generated with respect to $\mathcal{L}_c$ without the Jacobian term as a first approximation. We then add the Jacobian term, optimise the schedule, and train for an additional 5 thousand iterations.

In \Cref{fig:bimodal} we see that the linear schedule forms an estimate with greater variance than the true data and the predicted densities of the optimised schedules.

\begin{figure}
    \centering
    \includegraphics[width = \textwidth]{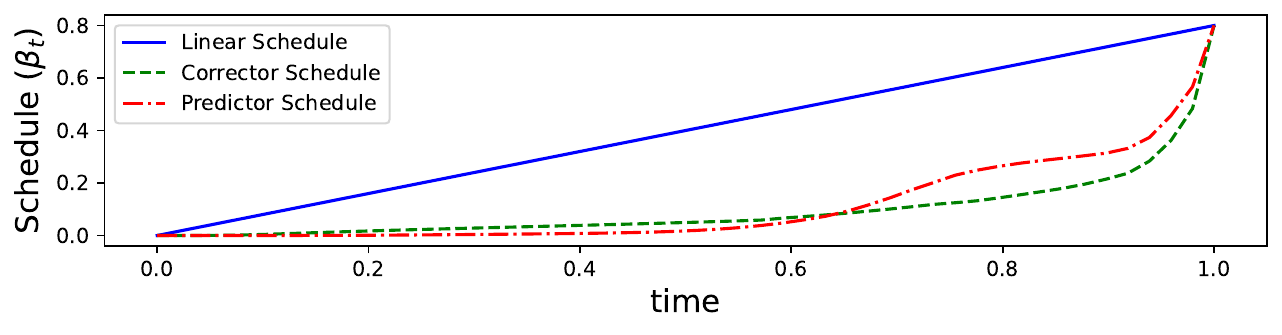}
    \includegraphics[width = \textwidth]{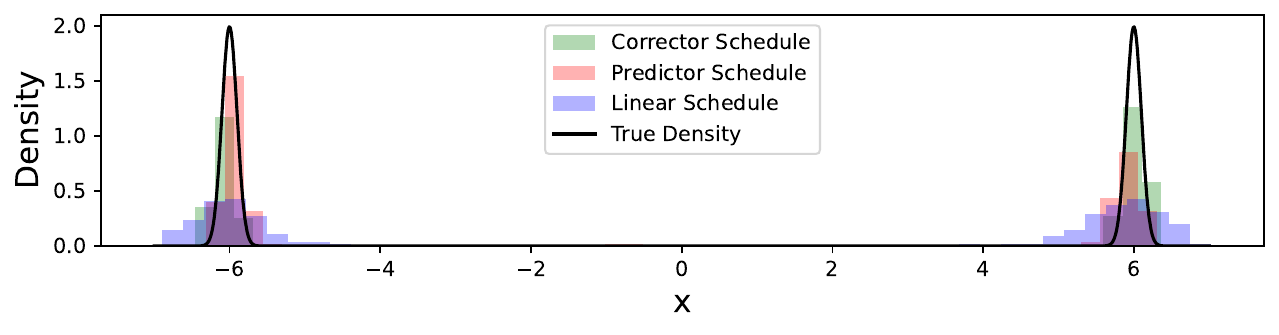}
    \includegraphics[width = \textwidth]{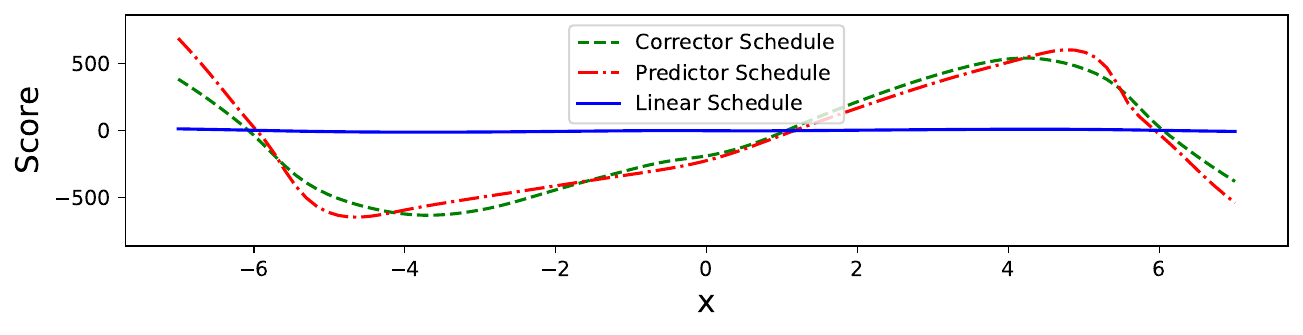}
    \caption{Schedules and density estimates for: linear (blue); Stein score optimised (green); and predictor optimised (red) schedules. 
    The predictor optimised schedule identifies a bump along the diffusion path where the reference Gaussian density splits into two modes.
    In the regions where the score is evaluated (around $\pm 6$), our trained score is accurate compared to the linear schedule score, which fails to match the slope of the true score, resulting in a wider variance density estimate. }
    \label{fig:bimodal}
\end{figure}

\subsubsection{Mollified Cantor Distribution}

\begin{figure}
    \centering
    \includegraphics[width = \textwidth]{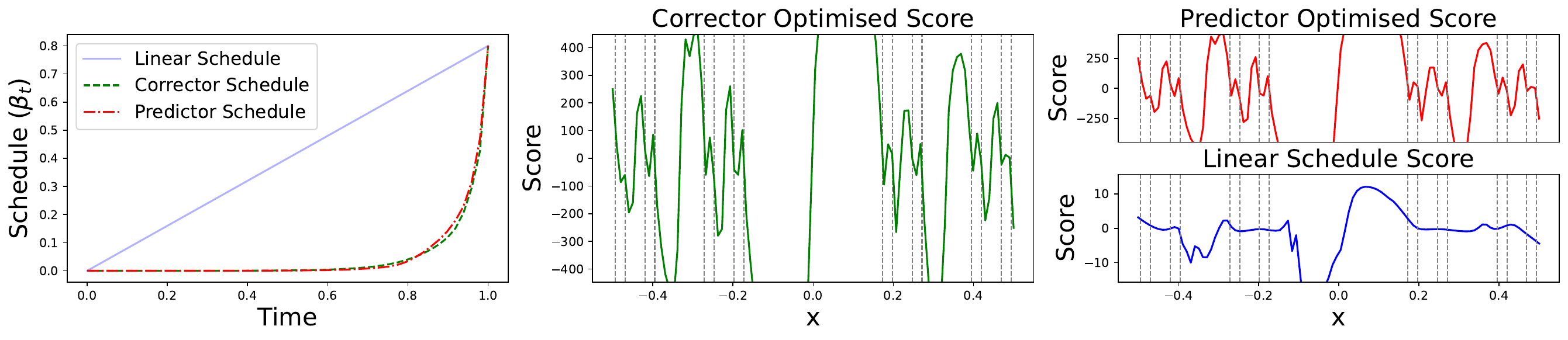}
    \caption{Comparison of sampling from a mollified Cantor distribution using DDMs with two different schedules: linear (blue) and optimised (green). The optimised schedule enables the DDM to capture eight distinct data modes centered on the mollified Cantor distribution (grey), whereas the linear schedule does not have clear mode separation. The optimised schedule diffusion accurately predicts the score for the mollified Cantor distribution, being near vertical lines interweaving the Cantor set, where the linear schedule fails to adequately approximate the score. 
    }
    \label{fig:cantor_compare}
\end{figure}

We train our model with $f(t) = \beta_t/2$ and $g(t) = \sqrt{\beta_t}$ using the online Algorithm \ref{alg:training} with $\gamma = 0.01$. 
The weight function is $v_{t_n}^2 = (1 - \bar{\alpha}_{t_n})$, where $\alpha_{t_n} = 1 - \beta_{t_n}$ and $\bar{\alpha}_{t_n} = \prod_{i=1}^{n} \alpha_{t_i}$. 
To capture high-frequency details, we train the one-dimensional model for 150,000 iterations using both a fixed linear schedule and our optimisation algorithm initialised at the linear schedule.
The difference in sample quality is evident in \Cref{fig:cantor_figure_1}. 

\begin{figure}
    \centering
    \includegraphics[width=\textwidth]{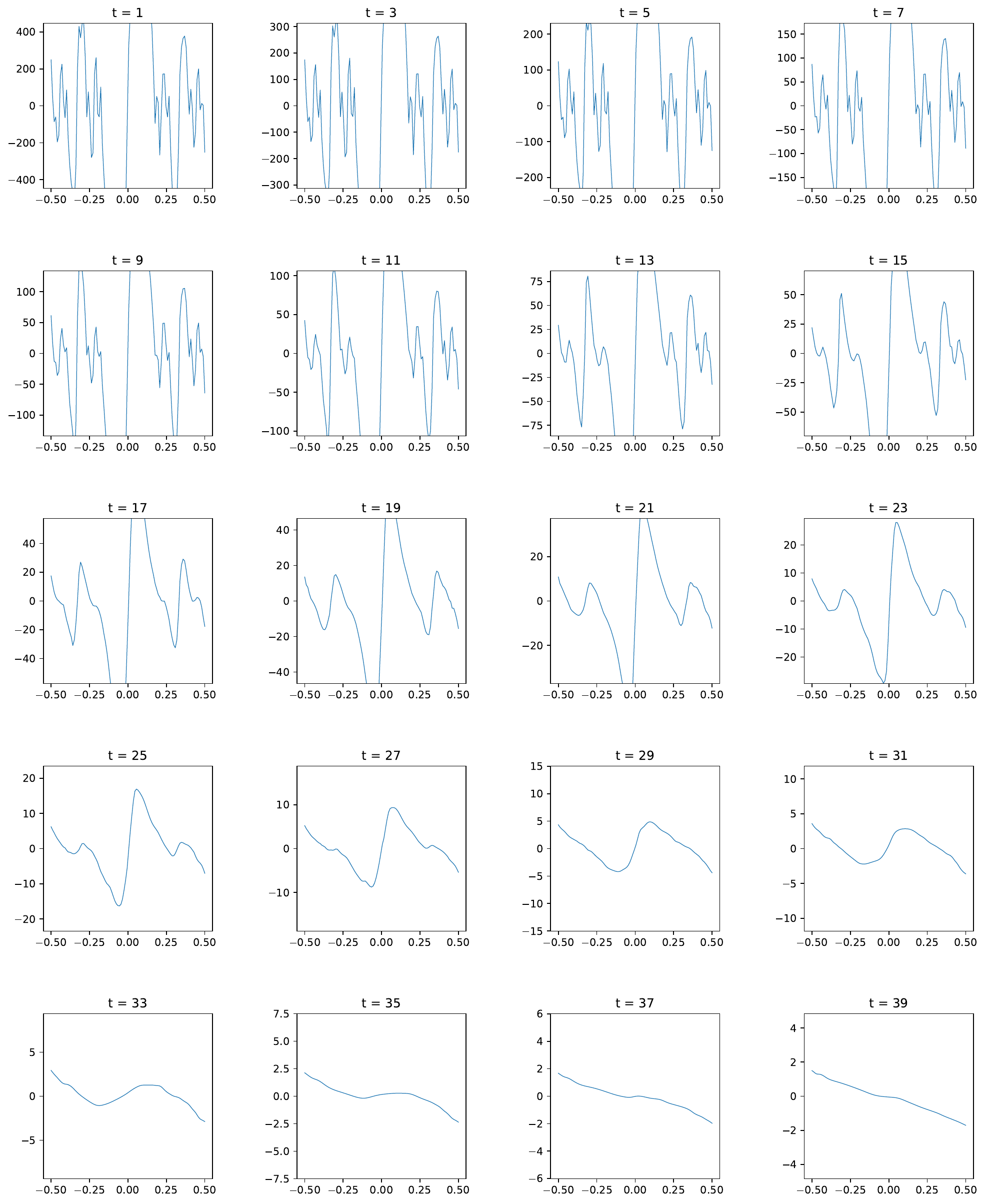}
    \caption{Evolution of the estimated score for the mollified Cantor distribution \Cref{ex:cantor} with a Corrector Optimised Schedule.
    In this case the linear schedule fails to evenly progress the progression of the score, see \Cref{fig:cantor_compare} showing the terminal score estimate in this case. 
    The estimated score exhibits a self-similar nature of interweaving roots around the centers of mass of the mollified Cantor distribution.
    Identification of these roots amounts to estimated modes in our density estimate, see \Cref{fig:cantor_figure_1}. }
    \label{fig:cantor_score_evolve}
\end{figure}

\subsection{Pretrained Image Models}\label{app:pretrained}
For sampling pretrained image models, we use the networks and implementation from \cite{karras2022elucidating}, \url{https://github.com/NVlabs/edm}. We compute the FID using the provided FID script within the codebase with the standard $50,000$ samples using the same fixed seeds $0-49999$ for all schedules. For each image dataset, we use the default sampling strategy included in the codebase by \cite{karras2022elucidating}. For unconditional CIFAR10 this is a deterministic 2nd order Heun solver with 18 timesteps, therefore a total of 35 NFE (number of function evaluations) for the underlying denoising network. For both unconditional FFHQ and unconditional AFHQv2 the same deterministic 2nd order Heun solver is used with 40 timesteps (NFE = 79). For class conditional ImageNet, the bespoke stochastic solver from \cite{karras2022elucidating} is used with 256 timesteps (NFE=511). The stochasticity settings are left at their default values for this dataset of  $S_{\text{churn}}=40$, $S_{\text{min}}=0.05$, $S_{\text{max}}=50$, $S_{\text{noise}}=1.003$.\\

To compute the corrector optimised schedule for each dataset, we use \Cref{alg:beta}. We use $100$ data samples for each dataset when computing $\mathcal{L}_c$ as we find the variation in learned schedule is small between different samples from the dataset. Our initial discretisation schedule used to calculate $\Lambda(t)$ is LogLinear with 100 steps. We then fit a monotonic spline to the cumulative estimated $\Lambda(t)$ and invert this function to find $\varphi^*$ function from which we can derive our schedules. This takes on the order of $5$ minutes to find the Corrector Optimised Schedule for CIFAR10 on a single RTX 2080Ti GPU.
For predictor optimised schedules we repeat the same procedure however also include estimation of the Hessian term. We use 5 samples of $v$ for each image datapoint when using Hutchinson's trace estimator. It takes 5 GPU hours to compute the Predictor Optimised Schedule for CIFAR10 due to the extra computational cost of computing the second derivatives.\\

We compare the corrector optimised and predictor optimised schedules for the 4 image datasets in \Cref{fig:schedules_for_datasets}. We find that all of the schedules have the same general shape with increasing step sizes in log space as the generative process approaches clean data. However, the curvature of the schedule varies between datasets which is to be expected as the schedule is determined through the score which will vary depending on the data distribution. We find that, in general, the higher resolution datasets (FFHQ, AFHQv2 and ImageNet) favour shorter steps nearer the start of the generative process at the expense of larger steps at low noise levels near the end of the generative process. 

\begin{figure}
    \centering
    \includegraphics[width=8cm]{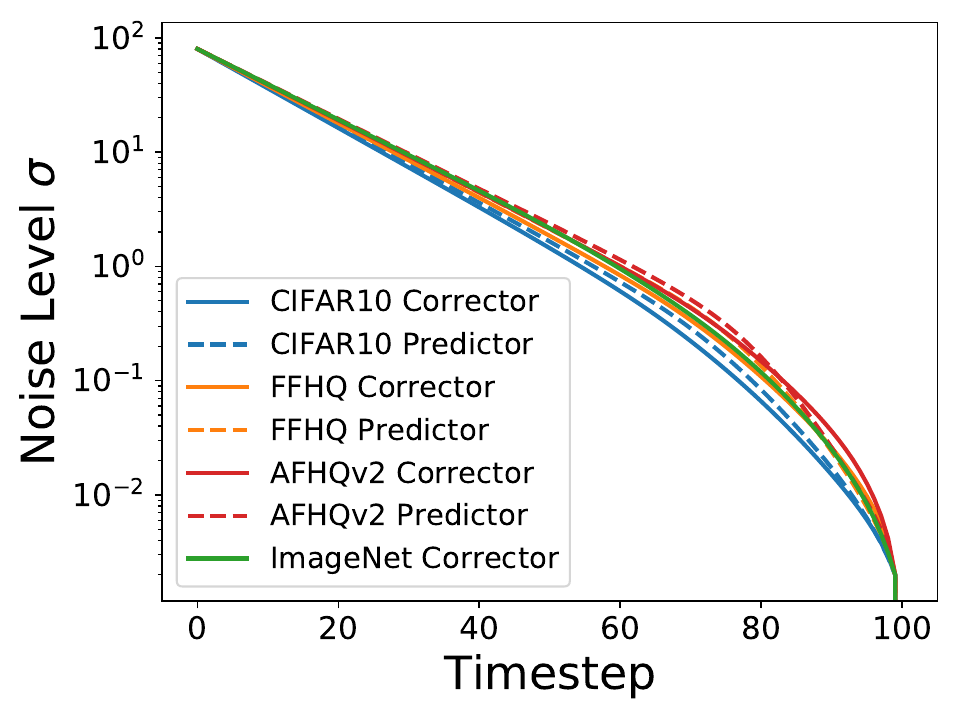}
    \caption{Corrector optimised and predictor optimised schedules for the $4$ image datasets, CIFAR10, FFHQ, AFHQv2 and ImageNet.}
    \label{fig:schedules_for_datasets}
\end{figure}

We analyse the local cost as a function of noise level $\sqrt{\delta(\sigma)}$ for the 4 images datasets within \Cref{fig:delta_datasets}. This local cost is used as the metric when determining velocities in the space of schedules from $p_1$ to $p_0$.

\begin{table}[h!]
\centering
\begin{tabularx}{\textwidth}{l*{6}{>{\centering\arraybackslash}X}}
\toprule
\# points, $T$ & \textbf{10}  & \textbf{20} & \textbf{30} & \textbf{50} & \textbf{100} \\
\midrule
CO (ours)  & $3.94$  & $3.78$ & $3.80$ & $3.81$ & $3.81$\\
$\rho = 3$   & $24.08$  & $4.90$ & $3.80$ & $3.77$ & $3.80$\\
$\rho = 7$   & $4.02$  & $3.76$ & $3.78$ & $3.80$ & $3.81$\\
$\rho = 100$ & $4.31$  & $3.81$ & $3.81$ & $3.81$ & $3.81$ \\
\bottomrule
\end{tabularx}
\vspace{1em}
\caption{Comparison of sFID across different amounts of discretisation points for different schedules on CIFAR10. 
 CO stands for our corrector optimised schedule. }
\label{tab:sFID_refine}
\end{table}

\begin{figure}
    \centering
    \includegraphics[width=8cm]{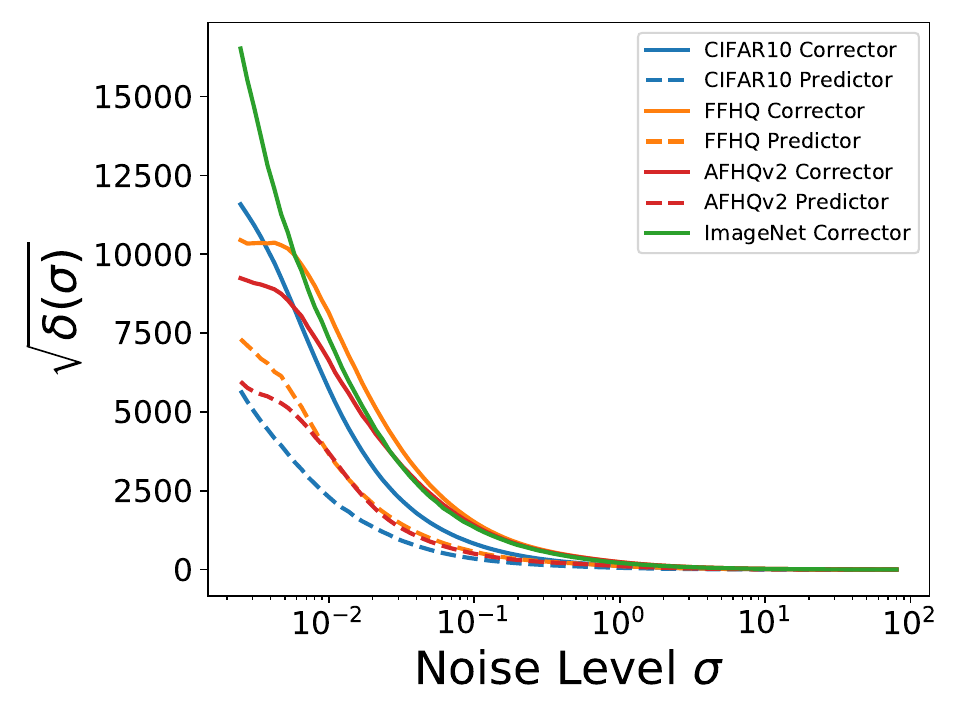}
    \caption{Local cost $\sqrt{\delta(\sigma)}$ versus $\sigma$ for 4 image datasets and using the corrector optimised versus predictor optimised costs.} 
    \label{fig:delta_datasets}
\end{figure}

\subsection{Online Schedule Optimisation of Images}

\begin{figure}
    \centering
    \includegraphics[width=\textwidth]{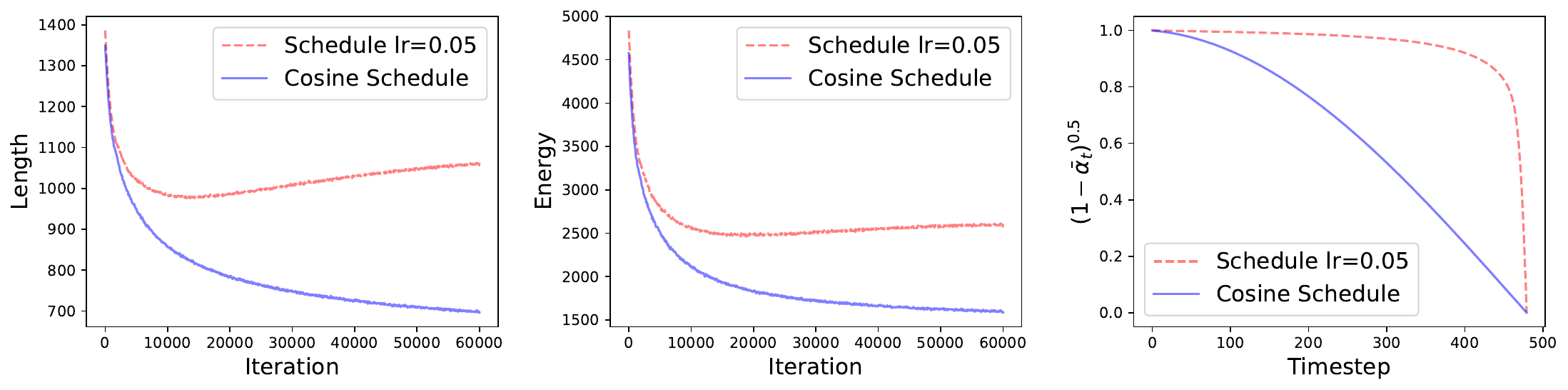}
    \caption{Progression of the length and energy \Cref{def:lambda-discrete} over training of MNIST.
    Both models are trained from initialisation, one with adaptive schedule learning (red) and one without (blue).
    We can see that the energy and length quantities increase during training.
    Recall that for a fixed path of scores $t \mapsto \nabla \log p_t$  that the length $\Lambda$ is constant.
    As we are learning the score, this value is \emph{not} constant during training. 
    Interestingly, by optimising the schedule during training we observe a larger length value, possibly indicating that the diffusion path learned with the optimised schedule differs greatly from the path learned without.}
    \label{fig:MNIST_length}
\end{figure}

\begin{figure}
    \centering
    \includegraphics[width=\textwidth]{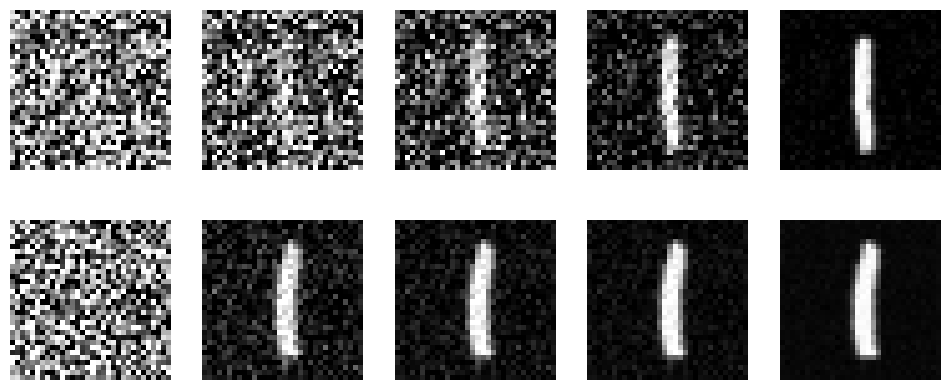}
    \caption{Sample progression of MNIST digits for the standard cosine schedule with $\epsilon = 0.008$ (top) against our optimised schedule (bottom). As we can see, the cosine schedule spends more time near the Gaussian reference distribution whereas the optimised schedule quickly determines large scale features and spends more time toward the data distribution.}
    \label{fig:MNIST_progression}
\end{figure}

\begin{figure}
    \centering
    \includegraphics[width=\textwidth]{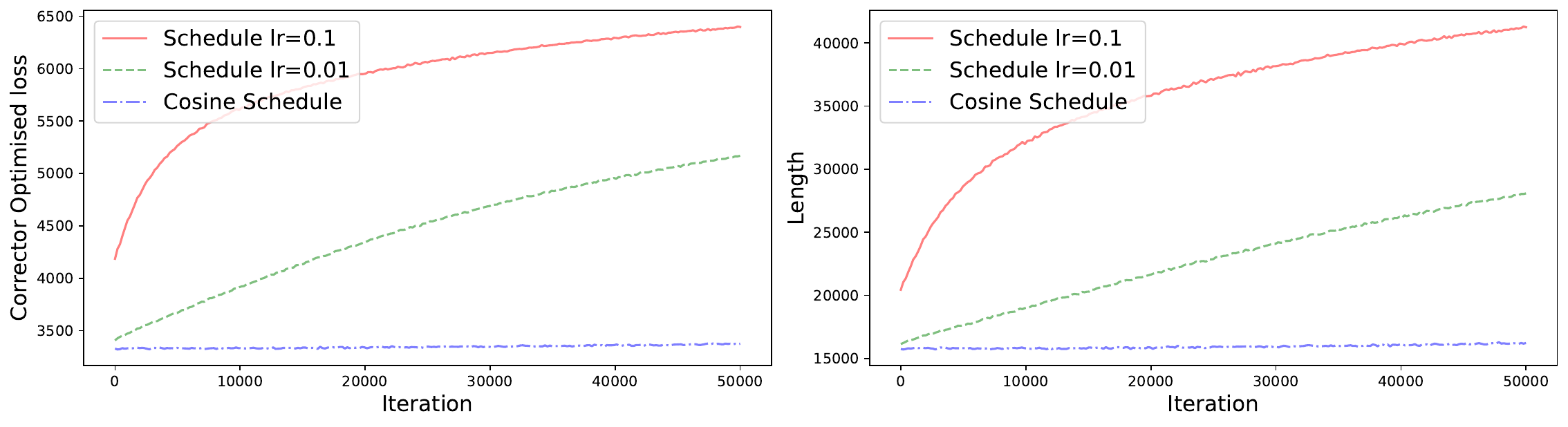}
    \caption{Progression of the length $\Lambda$ and cost through online training for CIFAR-10. For the larger learning rate, \Cref{alg:training} seems to garner a larger $\Lambda$ value at a faster rate that the lower schedule learning rate. For the fixed cosine schedule run, $\Lambda$ is stable, perhaps because the score estimate has stabilised for the fixed cosine schedule already during the model training burn-in phase.}
    \label{fig:cifar_length_energy_training}
\end{figure}

\textbf{MNIST}:
For the MNIST experiments, we trained a model with an image size of 32, 32 channels, with a U-Net architecture, with 1 residual block per U-Net resolution, without learning sigma, and with a dropout rate of 0.3. The diffusion process was configured with 500 diffusion steps. Training was conducted with a learning rate of 1e-4.

The schedule was also initialised with the Cosine schedule and trained for 60,000 iterations on an NVIDIA 1080 Ti GPU with 12 GB RAM. 
The batch size for MNIST was set to 128. We trained two models: one with the schedule optimisation and one without.
The schedule training rate was set to $\gamma = 0.05$, as stated in \Cref{alg:training}.
Training took approximately $12$ hours for either model. 

\textbf{CIFAR-10}: 
For the CIFAR-10 experiments, we trained a model with an image size of 32, 128 channels, and a U-Net architecture with 3 residual blocks per multiplier resolution (described in codebase \cite{nichol2021improved}), without learning sigma, and with a dropout rate of 0.3. The diffusion process was configured with 1000 diffusion steps and a cosine noise schedule.

The schedule was initialised with the Cosine schedule and trained for 160,000 iterations on 4 NVIDIA A40 GPUs, each with 48 GB RAM. 
The batch size was set to 1,536. 
After training for 160,000 iterations, we trained two models online: one with the corrector schedule optimisation and one without, for an additional 50,000 iterations on a single GPU with a batch size of 384.
The burn in phase for training took approximately 50 hours, with the individual schedule optimisations after this taking approximately 24 hours. 

\subsection{Licenses}
\label{sec:licenses}
Codebases:
\begin{itemize}
    \item Improved Denoising Diffusion Probabilistic Models \cite{nichol2021improved}: MIT License
    \item Elucidating the Design Space of Diffusion-Based Generative Models \cite{karras2022elucidating}: Attribution-NonCommercial-ShareAlike 4.0 International
\end{itemize}
Datasets:
\begin{itemize}
    \item CIFAR-10 \cite{krizhevsky2009learning}: MIT license
 \item FFHQ \cite{karras2018style}: Creative Commons BY-NC-SA 4.0 license
\item AFHQv2 \cite{choi2020starganv2}: Creative Commons BY-NC 4.0 license
\item ImageNet \cite{deng2009imagenet}: Unknown License 
\end{itemize}

\section{Acknowledgements of Funding}
CW acknowledges support from DST Australia. 
AC acknowledges support from the EPSRC CDT in Modern Statistics and Statistical Machine Learning (EP/S023151/1).
SS acknowledges support from the NSERC Postdoctoral Fellow Program.

\newpage

\newpage
\section*{NeurIPS Paper Checklist}

\begin{enumerate}

\item {\bf Claims}
    \item[] Question: Do the main claims made in the abstract and introduction accurately reflect the paper's contributions and scope?
    \item[] Answer: \answerYes{}.
    \item[] Justification: Our main theoretical and experimental contributions are clearly stated in the abstract and demonstrated in the paper. They reflect the  paper's contributions and scope.

\item {\bf Limitations}
    \item[] Question: Does the paper discuss the limitations of the work performed by the authors?
    \item[] Answer: \answerYes{}
    \item[] Justification: We discuss the limitations of our work in \Cref{sec:discussion}.

\item {\bf Theory Assumptions and Proofs}
    \item[] Question: For each theoretical result, does the paper provide the full set of assumptions and a complete (and correct) proof?
    \item[] Answer: \answerYes{}.
    \item[] Justification: All the proofs are proven in the supplementary material. They are duly cross-referenced. 

    \item {\bf Experimental Result Reproducibility}
    \item[] Question: Does the paper fully disclose all the information needed to reproduce the main experimental results of the paper to the extent that it affects the main claims and/or conclusions of the paper (regardless of whether the code and data are provided or not)?
    \item[] Answer: \answerYes{} %
    \item[] Justification: We provide detailed descriptions of our experimental procedures in \Cref{sec:experiment_details} and provide the code to run our experiments.

\item {\bf Open access to data and code}
    \item[] Question: Does the paper provide open access to the data and code, with sufficient instructions to faithfully reproduce the main experimental results, as described in supplemental material?
    \item[] Answer: \answerYes{} %
    \item[] Justification: We provide the code necessary to run our experiments.

\item {\bf Experimental Setting/Details}
    \item[] Question: Does the paper specify all the training and test details (e.g., data splits, hyperparameters, how they were chosen, type of optimizer, etc.) necessary to understand the results?
    \item[] Answer: \answerYes{} %
    \item[] Justification: We provide all our experiment details in \Cref{sec:experiment_details}.

\item {\bf Experiment Statistical Significance}
    \item[] Question: Does the paper report error bars suitably and correctly defined or other appropriate information about the statistical significance of the experiments?
    \item[] Answer: \answerNo{} %
    \item[] Justification: Our main metric is FID score for which it is standard practice to report it calculated on the first 50,000 images generated from the model. Standard deviations could be bootstrapped from this set but this is not standard practice. Furthermore, the computational cost to sample large image models 50,000 times multiple times is prohibitive for our academic compute cluster.

\item {\bf Experiments Compute Resources}
    \item[] Question: For each experiment, does the paper provide sufficient information on the computer resources (type of compute workers, memory, time of execution) needed to reproduce the experiments?
    \item[] Answer: \answerYes{} %
    \item[] Justification: We provide details of the compute resources used in \Cref{sec:experiment_details}.
    
\item {\bf Code Of Ethics}
    \item[] Question: Does the research conducted in the paper conform, in every respect, with the NeurIPS Code of Ethics \url{https://neurips.cc/public/EthicsGuidelines}?
    \item[] Answer: \answerYes{}.
    \item[] Justification:  After careful review of the NeurIPS Code of Ethics, it is clear that the research presented in this paper conforms with the Code of Ethics in every respect.

\item {\bf Broader Impacts}
    \item[] Question: Does the paper discuss both potential positive societal impacts and negative societal impacts of the work performed?
    \item[] Answer: \answerNA{}.
    \item[] Justification:  This paper is mostly theoretical and methodological. We do not see immediate societal impact of this work. However, we acknowledge that large scale implementation of our algorithm might suffer from the same societal biases as any other generative models. Indeed it could improve the quality of generative models and hence been used  to generate deepfakes for disinformation. 
    
\item {\bf Safeguards}
    \item[] Question: Does the paper describe safeguards that have been put in place for responsible release of data or models that have a high risk for misuse (e.g., pretrained language models, image generators, or scraped datasets)?
    \item[] Answer: \answerNA{}.
    \item[] Justification: The paper poses no such risks.

\item {\bf Licenses for existing assets}
    \item[] Question: Are the creators or original owners of assets (e.g., code, data, models), used in the paper, properly credited and are the license and terms of use explicitly mentioned and properly respected?
    \item[] Answer: \answerYes{} %
    \item[] Justification: We provide the references and licenses for the codebases and datasets used in our work in \Cref{sec:licenses}.

\item {\bf New Assets}
    \item[] Question: Are new assets introduced in the paper well documented and is the documentation provided alongside the assets?
    \item[] Answer: \answerNA{} %
    \item[] Justification: This paper does not introduce any new assets.
    \item[] Guidelines:
\item {\bf Crowdsourcing and Research with Human Subjects}
    \item[] Question: For crowdsourcing experiments and research with human subjects, does the paper include the full text of instructions given to participants and screenshots, if applicable, as well as details about compensation (if any)? 
    \item[] Answer: \answerNA{}.
    \item[] Justification: The paper does not involve crowdsourcing nor research with human subjects.

\item {\bf Institutional Review Board (IRB) Approvals or Equivalent for Research with Human Subjects}
    \item[] Question: Does the paper describe potential risks incurred by study participants, whether such risks were disclosed to the subjects, and whether Institutional Review Board (IRB) approvals (or an equivalent approval/review based on the requirements of your country or institution) were obtained?
    \item[] Answer: \answerNA{}.
    \item[] Justification: The paper does not involve crowdsourcing nor research with human subjects.

\end{enumerate}

\end{document}